%% file: main.tex
\documentclass[pdflatex,sn-mathphys-num]{sn-jnl}

\input{math_commands.tex}

\usepackage{hyperref}
\usepackage{comment}
\usepackage{url}

\usepackage{algorithm}
\usepackage{algpseudocode}
\usepackage{multirow}
\usepackage{booktabs}
\usepackage[normalem]{ulem}
\usepackage{graphicx}
\usepackage{enumitem,amssymb}
\usepackage{caption}
\usepackage{tikz}
\usetikzlibrary{matrix,arrows.meta,positioning,calc,decorations.pathreplacing}
\usepackage{subcaption}

\usepackage{todonotes}

\algrenewcommand\algorithmicrequire{\textbf{Input:}}
\algrenewcommand\algorithmicensure{\textbf{Output:}}

\begin{document}

\title{Fast and Featureless Node Representation Learning with Partial Pairwise Supervision}




\author[1]{\fnm{Sujan} \sur{Chakraborty}}
\email{sujan24@iisertvm.ac.in}

\author*[1]{\fnm{Saptarshi} \sur{Bej}}
\email{sbej7042@iisertvm.ac.in}

\affil*[1]{
  \orgdiv{School of Data Science},
  \orgname{Indian Institute of Science Education and Research},
  \orgaddress{
    \city{Thiruvananthapuram},
    \postcode{695551},
    \state{Kerala},
    \country{India}
  }
}
\keywords{Contrastive Learning, Node Representation Learning, Modularity Optimization, Pairwise Supervision, Network Embedding}
\maketitle

\begin{abstract}
\noindent We introduce \textit{Contrastive FUSE}, a fast and unified framework for scalable node representation learning in graphs with partially available pairwise node labels and no available node features. Unlike existing methods, we directly optimize a spectral contrastive objective that integrates community-aware structural signals with signed pairwise constraints. To support large-scale training, we replace the expensive modularity gradient with a lightweight approximation, which preserves the structure-seeking behavior of modularity while reducing the computational cost significantly. This yields an efficient optimization scheme with a natural gradient decomposition and adaptive learning-rate scaling, enabling fast iterative updates even on million-edge graphs. Extensive experiments on benchmark citation networks, large co-purchase graphs, and OGB datasets show that Contrastive FUSE achieves competitive or superior contrastive classification performance without relying on node features, while offering substantial runtime gains over existing baselines. These results highlight the effectiveness of coupling modularity-inspired structural learning with contrastive supervision for efficient and scalable contrastive node representation learning.
\end{abstract}

\section{Introduction}

Learning meaningful node representations is a fundamental problem in graph learning, with applications spanning social networks, biological interaction networks, recommendation systems, and scientific knowledge graphs. The goal is to embed nodes into a low-dimensional vector space such that structural properties of the graph and task-relevant relationships are preserved. A long line of work has explored unsupervised graph embedding methods based on random walks, spectral decompositions, and autoencoding architectures, including DeepWalk \cite{perozzi2014deepwalk}, Node2Vec \cite{grover2016node2vec}, and variational graph autoencoders (VGAE) \cite{kipf2016vgae}. More recently, self-supervised methods such as DGI \cite{velickovic2019dgi}, GRACE \cite{zhu2020grace}, and SGCL \cite{shu2021sgcl} have adopted augmentation-based contrastive learning objectives that maximize agreement between representations of the same node under different perturbations while contrasting them against other nodes.

In this work, we consider a different notion of contrastive learning. Rather than relying on augmentation-based agreement, we study node classification under partially observed pairwise supervision in featureless or feature-sparse graphs. Specifically, we assume that supervision is available in the form of pairwise constraints indicating whether two nodes should be considered similar or dissimilar, while explicit node attributes are unavailable, unreliable, or non-transferable across graph instances \cite{etzion2025multiview, chen2025encoder}. Our goal is to learn discriminative node embeddings directly from graph structure and pairwise constraints in a fast and scalable manner.

This setting arises naturally across several domains. In computational biology, many problems are fundamentally defined through pairwise relationships. For example, synthetic lethality prediction models genes as nodes in a protein--protein interaction network, with supervision provided through lethal and non-lethal gene pairs. Similarly, protein--protein interaction and drug--target interaction prediction often rely primarily on structural connectivity because node attributes are incomplete, noisy, context-dependent, or unavailable for newly observed entities \cite{xiao2020protein, yuan2021embeddti}. In recommendation systems, implicit feedback such as clicks or purchases is naturally represented through pairwise preference signals, while explicit user or item features are often sparse or unreliable \cite{cao2024pairwise, sidana2022implicit}. In privacy-sensitive settings such as healthcare or social networks, node attributes may be inaccessible due to legal or institutional constraints, making structure-only learning necessary \cite{etzion2025multiview}. Across these settings, supervision naturally exists in pairwise form, while embeddings must often be recomputed efficiently for evolving or context-specific graphs.

To address this problem, we propose \textit{Contrastive FUSE}, a fast and scalable framework for learning node embeddings from graph structure and pairwise constraints. The central idea is to augment the classical modularity maximization objective with a signed, normalized contrastive Laplacian constructed from labeled node pairs. This formulation encourages embeddings of similar node pairs to move closer while explicitly separating dissimilar pairs, all while preserving the global structural organization of the graph. To enable scalable optimization, we further introduce an efficient approximation of the modularity gradient that preserves its structure-seeking behavior while significantly reducing computational overhead. The resulting optimization procedure uses iterative gradient ascent with row-wise projection and avoids the need for deep encoders, graph augmentations, or expensive message-passing architectures.

We evaluate Contrastive FUSE on a diverse collection of benchmark datasets, including citation networks, co-purchase graphs, and large-scale OGB datasets. Across a wide range of settings, the proposed method achieves competitive or superior downstream node classification performance while requiring substantially lower runtime than existing contrastive baselines.

Our main contributions are summarized as follows:
\begin{enumerate}
\item We introduce a novel contrastive formulation for node representation learning under pairwise supervision, directly integrating labeled node-pair constraints into the embedding objective for featureless or feature-sparse graphs.
\item We propose a signed, normalized contrastive Laplacian that augments modularity-based structural learning by simultaneously attracting similar node pairs and repelling dissimilar node pairs in the embedding space.
\item We develop a computationally efficient optimization scheme based on an approximation of the modularity gradient, enabling fast and scalable training while preserving the structural regularization properties of modularity maximization.

\end{enumerate}

\section{Related Work}
\label{sec:relatedwork}


Spectral clustering has long served as a foundational framework for graph partitioning and community detection. Classical approaches construct low-dimensional node embeddings by eigendecomposing variants of the graph Laplacian, with the objective of minimizing cut-based criteria such as the normalized cut \cite{mercado2019signed}. Closely related to spectral cuts, modularity maximization \cite{newman2006modularity} seeks partitions that exhibit higher intra-community edge density than expected under a random null model, and can be interpreted as optimizing a quadratic form over the modularity matrix.

Early modularity-based algorithms, such as Louvain, relied on greedy heuristics and were not amenable to end-to-end optimization. More recent efforts have sought to integrate modularity into differentiable learning pipelines. For instance, Deep Modularity Networks (DMoN) \cite{tsitsulin2023dmon} introduced a relaxed modularity objective within Graph Neural Networks (GNNs), while DGCLUSTER \cite{bhowmick2024dgcluster} optimized modularity-based losses for attributed graphs. Although effective in unsupervised settings, these approaches optimize modularity through feature-driven neural encoders, implicitly relying on node features and deep architectures, and do not accommodate explicit pairwise supervision.


Graph Contrastive Learning (GCL) has emerged as a dominant paradigm for self-supervised node representation learning. Methods such as DGI \cite{velickovic2019dgi}, GRACE \cite{zhu2020grace}, and GraphCL \cite{you2020graphcl} construct multiple augmented views of a graph and optimize InfoNCE-style objectives that maximize agreement between embeddings of the same node across views while contrasting them against other nodes. These approaches learn representations by enforcing invariance to stochastic perturbations, and typically rely on deep encoders, extensive data augmentations, and large numbers of negative samples.

To address the semantic instability introduced by random augmentations, recent work has explored spectral interpretations of contrastive objectives. COLES \cite{zhu2021coles} reformulated Laplacian Eigenmaps within a contrastive framework, showing that spectral alignment can be viewed as a form of contrastive learning without explicit augmentations. Similarly, EigenMLP \cite{bo2023eigenmlp} and SpCo \cite{pan2023spco} proposed spectral encoders that are invariant to eigenvector perturbations, highlighting the compatibility between contrastive learning and spectral methods. Complementary directions include CCA-SSG \cite{zhang2021ccassg}, which replaces negative-sample-based objectives with canonical correlation maximization between graph views, and MVGRL \cite{hassani2020mvgrl}, which learns representations through contrastive agreement across diffusion-based multi-view graph augmentations.



On the other hand, pairwise supervision in the form of must-link and cannot-link constraints has been extensively studied in the context of constrained clustering \cite{cucuringu2016constrained}. Such constraints can be incorporated through modified Laplacians or generalized eigenvalue problems, offering theoretical guarantees and closed-form solutions. These ideas naturally align with contrastive principles, as positive constraints encourage proximity while negative constraints enforce separation in the embedding space.

Signed graph learning further formalizes this perspective by modeling both attractive and repulsive relationships. Methods such as SGCN \cite{derr2018sgcn} and SGCL \cite{shu2021sgcl} extend GCNs with signed convolutions that pull positive neighbors together and push negative neighbors apart. Subsequent extensions, including SBGCL \cite{zhang2023sbgcl} and BA-SGCL \cite{zhang2024basgcl}, adapt these ideas to bipartite and adversarial settings. In the spectral domain, signed Laplacians \cite{li2023slgnn, mercado2019signed} generalize classical spectral clustering to signed networks by explicitly encoding attraction and repulsion.

While these approaches leverage pairwise relations, they often depend on deep architectures or are not designed to jointly integrate pairwise supervision with community-aware structural objectives such as modularity that capture global structure of the graph as well as preserve pairwise supervision signals.


Our work adopts a \emph{task driven contrastive formulation} that directly incorporates supervision signals from partially observed pairwise node labels. Rather than using modularity as an unsupervised or pretraining objective, we integrate a modularity-inspired structural term directly into the supervised optimization problem. This results in a unified objective that simultaneously promotes community structure and enforces contrastive pairwise constraints, enabling fast and scalable learning of node embeddings in the absence of node features.

\section{Problem Statement}
\label{sec:probstatement}
Let $G=(V,E)$ be an undirected graph with $|V|=n$ nodes and adjacency matrix
$A\in\mathbb{R}^{n\times n}$.  
Let $d\in\mathbb{R}^n$ denote the degree vector, where $d_i=\sum_j A_{ij}$, and
$m=\frac{1}{2}\sum_{i,j}A_{ij}$ be the total number of edges.

In addition to graph structure, we are given a set of pairwise supervisory
signals encoded by a matrix
\[
Y\in\{-1,0,+1\}^{n\times n},
\]
where
\[
Y_{ij}=
\begin{cases}
+1, & \text{if $(i,j)$ is a positive pair (e.g., same class)},\\
-1, & \text{if $(i,j)$ is a negative pair},\\
0, & \text{if the pair is unlabeled}.
\end{cases}
\]

\subsection{Objective}
Our goal is to learn a low-dimensional node embedding matrix
\[
S = [S_1,\dots,S_n]^\top \in \mathbb{R}^{n\times k}, \qquad k \ll n,
\]
such that:
\begin{enumerate}
    \item nodes connected by positive pairs are embedded close together,
    \item nodes connected by negative pairs are embedded far apart, and
    \item the embeddings preserve community structure induced by the graph.
\end{enumerate}
This has been achieved by integrating information from a structural and a contrastive term as described in the next Section.




\section{Contrastive FUSE Algorithm}
\label{sec:contrastivefuse}

\subsection{Learning Goal}

Our goal is to learn a node embedding matrix
\[
S =
\begin{bmatrix}
S_1^\top\\
\vdots\\
S_n^\top
\end{bmatrix}
\in \mathbb{R}^{n\times k},
\qquad S_i \in \mathbb{R}^k,
\]
that simultaneously satisfies the following properties:
\begin{enumerate}
    \item \textbf{Structural coherence:} nodes that exhibit higher internal connectivity are encouraged to have similar embeddings.
    \item \textbf{Pairwise consistency:} positive pairs $(i,j)\in P_{+}$ should be
    embedded close together, while negative pairs $(i,j)\in P_{-}$ should be
    embedded far apart.
\end{enumerate}

\subsection{Modeling Structural Coherence}\label{sec:structural_coherence}

We adapt a slightly modified version of modularity as defined in \cite{newman2006modularity} to accommodate continuous valued high-dimension node feature learning. Our adaptation is as follows
\begin{equation}
Q(S)
=
\frac{1}{2m}
\sum_{i,j}
\left(
A_{ij}
-
\frac{d_i d_j}{2m}
\right)
\, S_i^\top S_j,
\end{equation}
where $A$ denotes the adjacency matrix, $d_i=\sum_j A_{ij}$ is the degree of node $i$,
and $m=\frac{1}{2}\sum_{i,j}A_{ij}$ is the total number of edges.
In matrix form, this expression reduces to
\begin{equation}
Q(S)
=
\frac{1}{2m}
\mathrm{Tr}\!\left(S^\top B S\right),
\end{equation}
where
\[
B
=
A - \frac{d d^\top}{2m}
\]
is the modularity matrix.

\subsubsection{Gradient Approximation}
Differentiating $Q(S)$ with respect to $S$ yields the exact gradient
\begin{equation}
\nabla_S Q_{\text{exact}}
=
\frac{1}{m}
\left(
A S
-
\frac{1}{2m}
\, d \left(d^\top S\right)
\right).
\end{equation}

While this expression faithfully captures the modularity objective,
the degree--degree correction term $d(d^\top S)$ introduces both numerical
instability and increased computational cost for large and dense graphs.
To address this, we adopt a linearized approximation in which the weighted
degree aggregation $d^\top S$ is replaced by the unweighted global sum
$\mathbf{1}^\top S$, yielding the proposed gradient
\begin{equation}
\nabla_S Q_{\text{prop}}
=
\frac{1}{2m}
\left(
A S
-
\frac{1}{2m}
\, d \left(\mathbf{1}^\top S\right)
\right),
\label{eq:proposed_grad}
\end{equation}
where $\mathbf{1}$ denotes the all-ones vector and
$\mathbf{1}^\top S = \sum_{i=1}^n S_i$.

In Section~\ref{sec:theory}, we formally show that the ascent direction induced
by $\nabla_S Q_{\text{prop}}$ remains closely aligned with that of
$\nabla_S Q_{\text{exact}}$, with a cosine similarity bounded away from zero
under mild assumptions on the embedding distribution and graph degree
structure. This establishes the theoretical validity of the proposed
approximation while enabling improved numerical stability and scalability. A detailed experiment on the exact vs approximate gradient approximation has been reported in Appendix~\ref{app:exact_vs_approx}.

\subsection{Modeling Pairwise Consistency}

In addition to graph structure, we assume access to a set of labeled node pairs
\[
P = \{(i,j,y_{ij})\},
\qquad y_{ij} \in \{+1,-1\},
\]
where $y_{ij}=+1$ indicates that nodes $i$ and $j$ belong to the same class, while $y_{ij}=-1$ indicates that they belong
to different classes.

We encode this supervision in a sparse signed pair matrix
\[
Y_{ij} =
\begin{cases}
+1, & (i,j)\in P_{+},\\
-1, & (i,j)\in P_{-},\\
0, & \text{otherwise}.
\end{cases}
\]

The contrastive degree of node $i$ is defined as
\[
D_{c,ii} = \sum_j |Y_{ij}|,
\]
and the associated signed normalized Laplacian is given by
\[
L_c = I - D_c^{-1/2}\, Y \, D_c^{-1/2}.
\]

This operator penalizes violations of pairwise constraints: positive pairs
encourage similarity between embeddings, while negative pairs encourage
oppositeness. Such signed Laplacians are standard in the analysis of signed
graphs and contrastive relations as can be found in \cite{zhu2021coles}.

\subsection{Contrastive FUSE Objective}

Combining signals arising from structural coherence and pairwise consistency, the Contrastive FUSE objective is defined as
\[
\boxed{
J(S)
=
\mathrm{Tr}(S^\top \widetilde B S)
\;-\;
\lambda \, \mathrm{Tr}(S^\top L_c S),
}
\]
where
\[
\widetilde B
=
A - \frac{d\,\mathbf{1}^\top}{2m},
\] and $\lambda>0$ and controls the relative strength of contrastive supervision.

\subsubsection{Interpretation of the Contrastive Laplacian Term}
We analyze the quadratic form induced by the signed contrastive Laplacian
\[
L_c = I - D_c^{-1/2} Y D_c^{-1/2},
\]
where $Y \in \{-1,0,+1\}^{n\times n}$ encodes pairwise supervision and
\[D_{c,ii} = \sum_j |Y_{ij}|\] denotes the contrastive degree of node $i$.

By expanding the trace, we obtain
\[
\mathrm{Tr}(S^\top L_c S)
=
\mathrm{Tr}(S^\top S)
-
\mathrm{Tr}\!\left(S^\top D_c^{-1/2} Y D_c^{-1/2} S\right).
\]
The first term satisfies
\[
\mathrm{Tr}(S^\top S) = \sum_{i=1}^n \|S_i\|_2^2,
\]
where $S_i$ denotes the $i$-th row of the embedding matrix $S$.
For the second term, let
\[
B := D_c^{-1/2} Y D_c^{-1/2}.
\]
Using the trace identity
\[
\mathrm{Tr}(S^\top B S)
=
\sum_{i,j} B_{ij}\,\langle S_i, S_j\rangle,
\]
where $S_i$ denotes the $i$-th row of $S$, we obtain
\[
\mathrm{Tr}\!\left(S^\top D_c^{-1/2} Y D_c^{-1/2} S\right)
=
\sum_{i,j}
\left(D_c^{-1/2} Y D_c^{-1/2}\right)_{ij}
\langle S_i, S_j\rangle.
\]

Since $D_c^{-1/2}$ is diagonal with
\[
(D_c^{-1/2})_{ii} = \frac{1}{\sqrt{D_{c,ii}}},
\]
the $(i,j)$-th entry of $B$ satisfies
\[
\left(D_c^{-1/2} Y D_c^{-1/2}\right)_{ij}
=
\frac{Y_{ij}}{\sqrt{D_{c,ii}}\,\sqrt{D_{c,jj}}}.
\]

Substituting this expression into the trace expansion yields
\[
\mathrm{Tr}\!\left(S^\top D_c^{-1/2} Y D_c^{-1/2} S\right)
=
\sum_{i,j}
\frac{Y_{ij}}{\sqrt{D_{c,ii}}\,\sqrt{D_{c,jj}}}
\langle S_i, S_j\rangle.
\]

Equivalently, this can be written as
\[
\mathrm{Tr}\!\left(S^\top D_c^{-1/2} Y D_c^{-1/2} S\right)
=
\sum_{i,j}
Y_{ij}
\left\langle
\frac{S_i}{\sqrt{D_{c,ii}}},
\frac{S_j}{\sqrt{D_{c,jj}}}
\right\rangle.
\]

Since $Y_{ij} \in \{-1,0,+1\}$, we decompose it as
\[
Y_{ij} = |Y_{ij}| \cdot y_{ij},
\qquad y_{ij} \in \{+1,-1\},
\]
which allows us to rewrite the quadratic form as
\[
\mathrm{Tr}(S^\top L_c S)
=
\sum_i \|S_i\|_2^2
-
\sum_{i,j}
|Y_{ij}|
\left\langle
\frac{S_i}{\sqrt{D_{c,ii}}},
y_{ij}\frac{S_j}{\sqrt{D_{c,jj}}}
\right\rangle.
\]

To obtain a distance-based interpretation, we apply the identity
\[
\|a-b\|_2^2
=
\|a\|_2^2 + \|b\|_2^2 - 2\langle a,b\rangle
\]
to the specific substitutions
\[
a = \frac{S_i}{\sqrt{D_{c,ii}}},
\qquad
b = y_{ij}\frac{S_j}{\sqrt{D_{c,jj}}}.
\]
Rearranging terms yields
\[
- \left\langle a, b \right\rangle
=
\frac{1}{2}
\left(
\|a-b\|_2^2
-
\|a\|_2^2
-
\|b\|_2^2
\right).
\]
Substituting this expression back into the sum and collecting terms, we arrive
at the final form
\[
\mathrm{Tr}(S^\top L_c S)
=
\sum_{i,j}
\frac{|Y_{ij}|}{2}
\left\|
\frac{S_i}{\sqrt{D_{c,ii}}}
-
y_{ij}\frac{S_j}{\sqrt{D_{c,jj}}}
\right\|_2^2.
\]

\paragraph{Geometric Interpretation}
This expression reveals that the contrastive Laplacian penalizes pairwise
distances in the embedding space in a supervision-aware manner. For a positive
pair $(i,j)$ with $y_{ij}=+1$, the penalty term becomes
\[
\left\|
\frac{S_i}{\sqrt{D_{c,ii}}}
-
\frac{S_j}{\sqrt{D_{c,jj}}}
\right\|_2^2,
\]
which encourages the embeddings of nodes $i$ and $j$ to be close, promoting
similarity. In contrast, for a negative pair $(i,j)$ with $y_{ij}=-1$, the term
reduces to
\[
\left\|
\frac{S_i}{\sqrt{D_{c,ii}}}
+
\frac{S_j}{\sqrt{D_{c,jj}}}
\right\|_2^2,
\]
which is minimized when $S_i$ and $S_j$ point in opposite directions, thereby
explicitly enforcing dissimilarity. The normalization by
$\sqrt{D_{c,ii}}$ ensures that nodes with many contrastive relationships do not
dominate the objective. Overall, the term
$\mathrm{Tr}(S^\top L_c S)$ acts as a signed, normalized pairwise regularizer
that pulls positive pairs together while pushing negative pairs apart in the
embedding space.

\subsection{Optimization}
\label{subsec:optimization}

We optimize $J(S)$ via projected gradient ascent. The approximation of $\nabla_S Q_{\text{exact}}$ to $\nabla_S Q_{\text{prop}}$ as discussed in Section~\ref{sec:structural_coherence} induces the structural gradient (upto a proportionality constant) derived from the first term of $J(S)$
\[
G_{\mathrm{mod}} = \widetilde B S.
\]

Pairwise supervision is incorporated through the contrastive gradient
\[
G_{\text{con}} = - L_c S.
\]
derived from the second term of $J(S)$.

At each iteration, gradients from structural and contrastive components are
combined as
\[
G = G_{\text{mod}} + \lambda\, G_{\text{con}},
\]
followed by a gradient ascent update
\[
\widetilde S = S + \eta\, G,
\]
where $\eta>0$ is the step size.

To prevent trivial scaling solutions and ensure comparable embedding magnitudes
across nodes, we project onto the constraint set
\[
\|S_i\|_2 = 1, \qquad \forall i\in V,
\]
via row-wise normalization:
\[
S_i \leftarrow \frac{\widetilde S_i}{\|\widetilde S_i\|_2}.
\]

This procedure yields embeddings that encode both community structure and
pairwise contrastive information. A pipeline for this algorithm can be seen in Fig~\ref{fig:contrastive_fuse_pipeline}.

\begin{algorithm}
\caption{Contrastive FUSE}
\label{alg:contrastive-fuse}
\begin{algorithmic}[1]
\Require 
Graph adjacency matrix $A$,  
degree vector $d$,  
contrastive pair matrix $Y$,  
embedding dimension $k$,  
learning rate $\eta$,  
contrastive weight $\lambda$,  
number of iterations $T$
\Ensure 
Node embedding matrix $S \in \mathbb{R}^{n \times k}$

\State Construct the modularity matrix:
\[
\widetilde B \gets A - \frac{d\,\mathbf{1}^\top}{2m}, 
\qquad m = \frac{1}{2}\sum_{i,j} A_{ij}
\]

\State Construct the contrastive Laplacian:
\[
D_c \gets \mathrm{diag}(|Y|\mathbf{1}), 
\qquad 
L_c \gets I - D_c^{-1/2} Y D_c^{-1/2}
\]

\State Initialize $S \in \mathbb{R}^{n \times k}$ with random vectors
\State Normalize rows of $S$: $\; S_i \gets S_i / \|S_i\|_2$

\For{$t = 1$ to $T$}
    \State Compute structural gradient:
    \[
    G_{\mathrm{mod}} \gets \widetilde B S
    \]
    
    \State Compute contrastive gradient:
    \[
    G_{\mathrm{con}} \gets - L_c S
    \]
    
    \State Gradient ascent update:
    \[
    \widetilde S \gets S + \eta \big(G_{\mathrm{mod}} + \lambda G_{\mathrm{con}}\big)
    \]
    
    \State Project onto constraint set (row normalization):
    \[
    S_i \gets \frac{\widetilde S_i}{\|\widetilde S_i\|_2},
    \qquad \forall i \in V
    \]
\EndFor

\State \Return $S$
\end{algorithmic}
\end{algorithm}

\subsection{Pairwise Classification}
\label{subsec:classification}
Given a node embedding matrix $S \in \mathbb{R}^{n \times k}$ obtained from an
embedding method such as Contrastive FUSE, DeepWalk, Node2Vec, or the other baselines we perform
pairwise classification using a graph neural network (GNN)–based
classifier.

\subsubsection{GNN-based Refinement}
The initial embeddings $S$ are treated as node features and passed through a
GNN encoder $f_{\text{GNN}}(\cdot)$, instantiated as a GCN, GAT, or GraphSAGE
network:
\[
Z = f_{\text{GNN}}(S, A),
\qquad
Z \in \mathbb{R}^{n \times d'},
\]
where $A$ is the adjacency matrix of the graph and $Z_i$ denotes the refined
embedding of node $i$.

\subsubsection{Pairwise Representation}
For a given node pair $(i,j)$, a pairwise representation is constructed by
concatenating the corresponding node embeddings:
\[
h_{ij} = [\, Z_i \,\|\, Z_j \,] \in \mathbb{R}^{2d'}.
\]

\subsubsection{Similarity Prediction}
The pair embedding $h_{ij}$ is passed through a multi-layer perceptron (MLP)
similarity head $g(\cdot)$ to produce a scalar similarity score:
\[
\hat{y}_{ij} = g(h_{ij}),
\]
which is interpreted as the probability that the pair $(i,j)$ belongs to the
positive class.

\subsubsection{Training Objective}
The classifier is trained using a binary cross-entropy loss over labeled pairs:
\[
\mathcal{L}_{\text{BCE}}
=
-\sum_{(i,j)}
\Big[
y_{ij}\log \sigma(\hat{y}_{ij})
+
(1-y_{ij})\log\big(1-\sigma(\hat{y}_{ij})\big)
\Big],
\]
where $y_{ij}\in\{0,1\}$ denotes the ground-truth pair label and $\sigma(\cdot)$
is the sigmoid function.

This contrastive GNN-based classifier enables effective evaluation of learned
embeddings on downstream pairwise prediction tasks.

\section{Experiments}
\label{sec:experiments}

\subsection{Datasets}
\label{subsec:datasets}

\begin{table}
    \footnotesize
    \centering
    \begin{tabular}{lcccc}
        \hline
         \textbf{Dataset} & \textbf{\# Nodes} & \textbf{\# Edges} & \textbf{\# Classes} & \textbf{Given Emb Dim.} \\
        \hline
        Cora & 2,708 & 5,429 & 7 & 1,433 \\
        CiteSeer & 3,327 & 9,104 & 6 & 3,703 \\
        PubMed & 19,717 & 44,338 & 3 & 500 \\
        Amazon Photo & 7,487 & 119,043 & 8 & 745 \\
        WikiCS & 11,701 & 216,123 & 10 & 300 \\
        ArXiV & 1,69,343 & 1,166,243 & 40 & 128 \\
        Products & 2,449,029 & 61,859,140 & 47 & 100\\
        \hline
    \end{tabular}
    \caption{Statistics of the benchmark datasets used in the experiments.}
    \label{tab:dataset_stats}
\end{table}

We evaluate the proposed Contrastive FUSE framework on a diverse set of
benchmark graph datasets covering citation networks, co-purchase graphs,
and large-scale academic graphs.  
Specifically, we use \textbf{Cora} \cite{cora}, \textbf{CiteSeer} \cite{citeseer}, \textbf{PubMed} \cite{pubmed},
\textbf{WikiCS} \cite{wikics}, \textbf{Amazon Photo} \cite{photo}, and the large-scale
\textbf{OGBN-ArXiv} \cite{hu2020open} and \textbf{OGBN-Products} \cite{hu2020open} datasets, details of which have been given in Table~\ref{tab:dataset_stats}.

The first five datasets are treated as medium-scale graphs and results are
reported as averages across them.  
OGBN-ArXiv and OGBN-Products are used exclusively for scalability analysis because of their
substantially larger size and higher density.

\subsection{Baselines}
\label{subsec:baselines}

We compare Contrastive FUSE against a broad range of unsupervised and self-supervised node embedding methods to validate its efficacy. For unsupervised structural baselines, we employ DeepWalk \cite{perozzi2014deepwalk}, Node2Vec \cite{grover2016node2vec}, which rely on random walks to capture neighborhood locality, as well as VGAE \cite{kipf2016vgae} as a representative generative baseline. To represent state-of-the-art self-supervised approaches, we include DGI \cite{velickovic2019dgi}, which maximizes mutual information between local and global patch representations, and GRACE \cite{zhu2020grace}, which utilizes augmentation-based contrastive learning. We further include COLES \cite{zhu2021coles}, a spectral contrastive method based on Laplacian Eigenmaps, CCA-SSG \cite{zhang2021ccassg}, which learns invariant graph representations through canonical correlation maximization without negative sampling, and MVGRL \cite{hassani2020mvgrl}, which performs multi-view contrastive learning using diffusion-based graph augmentations. Given the signed nature of our objective, we also compare against SGCL \cite{shu2021sgcl}, a contrastive method explicitly designed for signed graphs. Apart from these, we include a low- and high-end baseline, namely Random (embeddings drawn from a Gaussian distribution) and Given (Word2Vec-derived embeddings provided along with the datasets), respectively.

\subsection{Experimental Setup}

Unless otherwise stated, Contrastive FUSE is trained with scaled parameters
$\eta_{\text{scaled}}=10^{5}$ and $\lambda_{\text{scaled}}=0.75$ for all datasets
except ArXiv and Products.  
For OGBN-ArXiv and OGBN-Products, we use $\eta_{\text{scaled}}=10^{6}$ and
$\lambda_{\text{scaled}}=0.5$ to accommodate the higher graph density.

To ensure stability across different numbers of contrastive pairs and graph
densities, the effective learning rate and contrastive weight are adapted as:
\[
\text{$p_*$}=\max\!\left(0.25,\frac{5000}{P}\right),
\qquad
\text{$d_*$}=\frac{1}{\sqrt{\text{$\bar{d}$}}},
\]
\[
\eta=\eta_{\text{scaled}}\cdot\text{$p_*$}\cdot\text{$d_*$},
\qquad
\lambda=\lambda_{\text{scaled}}\cdot
\frac{\text{$p_*$}}{\text{$d_*$}}.
\]
where $P$, $p_*$, $d_*$ and $\bar{d}$ corresponds to the number of pairs, pair scaling factor, density scaling factor and average degree of the graph respectively.

\subsubsection{Pair Construction Protocol}

\emph{Sampling strategy and relation to labels.} : Let $(V = {1, \dots, n})$ denote the node set with labels $(\ell : V \to {1, \dots, C})$. We construct a set of contrastive pairs $(\mathcal{P} = \mathcal{P}^{+} \cup \mathcal{P}^{-})$, where

$$
\mathcal{P}^{+} \subseteq {(i,j) : \ell(i) = \ell(j), , i \neq j}, \quad
\mathcal{P}^{-} \subseteq {(i,j) : \ell(i) \neq \ell(j)}.
$$

Each pair is labeled as $(y_{ij} = +1) for ((i,j) \in \mathcal{P}^{+})$ and $(y_{ij} = -1)$ for $((i,j) \in \mathcal{P}^{-})$. Thus, pairwise supervision is directly aligned with class (and often community) structure.\\

\emph{Positive/negative ratio.} : We adopt a balanced sampling strategy with
$$
|\mathcal{P}^{+}| = \left\lfloor \frac{P}{2} \right\rfloor, \quad
|\mathcal{P}^{-}| = P - |\mathcal{P}^{+}|,
$$
ensuring an approximately equal proportion of positive and negative pairs, which stabilizes contrastive learning.\\

\emph{Sampling procedure.} : Pairs are sampled uniformly at random within each set $(\mathcal{P}^{+})$ and $(\mathcal{P}^{-})$. For classes with limited nodes, sampling with replacement is used when necessary to maintain the desired number of pairs. All sampling is controlled via a fixed random seed to ensure reproducibility. \\

\emph{Handling of trivial (easy) pairs.} : We do not explicitly filter “easy” pairs (e.g., nodes that are already structurally close or distant). Instead, the sampling remains unbiased, resulting in a natural mixture of easy and hard pairs. This avoids introducing additional heuristics and provides a stable training signal in practice. \\

\emph{Relation to graph structure.} : Importantly, pair labels are derived solely from node labels and not from structural proximity (e.g., adjacency or shortest paths). This ensures that the contrastive supervision provides complementary information to the structural objective, rather than redundantly encoding graph topology. \\

For downstream evaluation, embeddings are used as input features to
\textbf{Logistic Regression}, as well as to \textbf{GCN} \cite{GCN}, \textbf{GAT} \cite{GAT}, and
\textbf{GraphSAGE} \cite{GraphSAGE} classifiers trained for 100 epochs using Adam with a
learning rate of 0.001. GCN, GAT and GraphSAGE are used to further refine the embeddings which are finally concatenated and classified pairwise through an MLP. Results are averaged over contrastive pair counts of
50k, 100k, and 500k across five runs. Evaluation metrics include \textbf{Accuracy} and \textbf{Macro-F1}.

\section{Results}
\label{subsec:results}

\subsection{Downstream Classification Performance}
\label{subsubsec:classification}

\begin{table}
\centering
\footnotesize
\renewcommand{\arraystretch}{1.2}

\begin{subtable}[t]{\textwidth}
\begin{tabular}{l | rr | rr}
\toprule
   & \multicolumn{2}{c}{\textbf{Logistic}}
   & \multicolumn{2}{c}{\textbf{GAT}} \\
\multicolumn{1}{c}{\textbf{Embedding}} \\
[-1.2em]
\cmidrule(lr){2-3}
\cmidrule(lr){4-5}
   & Acc & F1 & Acc & F1 \\
\midrule
DeepWalk
& 0.498 $\pm$ 0.005 & 0.495 $\pm$ 0.011
& \uline{0.759 $\pm$ 0.012} & \uline{0.767 $\pm$ 0.012} \\
DGI
& 0.499 $\pm$ 0.006 & 0.502 $\pm$ 0.009
& 0.562 $\pm$ 0.009 & 0.655 $\pm$ 0.032 \\
Cont$_\text{FUSE}$
& \textbf{0.505 $\pm$ 0.022} & \textbf{0.531 $\pm$ 0.031}
& \textbf{0.762 $\pm$ 0.010} & \textbf{0.769 $\pm$ 0.011} \\
GRACE
& 0.499 $\pm$ 0.006 & 0.502 $\pm$ 0.013
& 0.586 $\pm$ 0.016 & 0.675 $\pm$ 0.024 \\
Node2Vec
& 0.496 $\pm$ 0.005 & 0.496 $\pm$ 0.013
& 0.756 $\pm$ 0.009 & 0.765 $\pm$ 0.011 \\
Random
& \uline{0.502 $\pm$ 0.003} & \uline{0.502 $\pm$ 0.009}
& 0.513 $\pm$ 0.006 & 0.622 $\pm$ 0.040 \\
Given
& 0.497 $\pm$ 0.005 & 0.494 $\pm$ 0.013
& 0.691 $\pm$ 0.017 & 0.734 $\pm$ 0.017 \\
SGCL
& 0.499 $\pm$ 0.005 & 0.501 $\pm$ 0.012
& 0.620 $\pm$ 0.013 & 0.675 $\pm$ 0.025 \\
VGAE
& 0.499 $\pm$ 0.006 & 0.499 $\pm$ 0.018
& 0.729 $\pm$ 0.013 & 0.741 $\pm$ 0.011 \\
COLES
& 0.498 $\pm$ 0.003 & 0.500 $\pm$ 0.009
& 0.612 $\pm$ 0.015 & 0.664 $\pm$ 0.036 \\
CCA-SSG
& 0.498 $\pm$ 0.005 & 0.500 $\pm$ 0.012
& 0.579 $\pm$ 0.049 & 0.650 $\pm$ 0.037 \\
MVGRL
& 0.502 $\pm$ 0.011 & 0.512 $\pm$ 0.018
& 0.523 $\pm$ 0.078 & 0.644 $\pm$ 0.045 \\
\bottomrule
\end{tabular}
\caption*{}
\end{subtable}

\begin{subtable}[t]{\textwidth}
\begin{tabular}{l | rr | rr}
\toprule
   & \multicolumn{2}{c}{\textbf{GCN}}
   & \multicolumn{2}{c}{\textbf{GraphSAGE}} \\
\multicolumn{1}{c}{\textbf{Embedding}} \\
[-1.2em]
\cmidrule(lr){2-3}
\cmidrule(lr){4-5}
   & Acc & F1 & Acc & F1 \\
\midrule
DeepWalk
& \uline{0.749 $\pm$ 0.017} & \uline{0.756 $\pm$ 0.013}
& \uline{0.739 $\pm$ 0.017} & \uline{0.745 $\pm$ 0.015} \\
DGI
& 0.552 $\pm$ 0.007 & 0.621 $\pm$ 0.037
& 0.566 $\pm$ 0.015 & 0.649 $\pm$ 0.036 \\
Cont$_\text{FUSE}$
& \textbf{0.751 $\pm$ 0.012} & \textbf{0.756 $\pm$ 0.011}
& \textbf{0.758 $\pm$ 0.010} & \textbf{0.759 $\pm$ 0.009} \\
GRACE
& 0.591 $\pm$ 0.014 & 0.663 $\pm$ 0.022
& 0.597 $\pm$ 0.012 & 0.676 $\pm$ 0.023 \\
Node2Vec
& 0.747 $\pm$ 0.015 & 0.753 $\pm$ 0.012
& 0.732 $\pm$ 0.019 & 0.738 $\pm$ 0.018 \\
Random
& 0.514 $\pm$ 0.006 & 0.602 $\pm$ 0.045
& 0.509 $\pm$ 0.004 & 0.602 $\pm$ 0.043 \\
Given
& 0.685 $\pm$ 0.016 & 0.715 $\pm$ 0.019
& 0.674 $\pm$ 0.013 & 0.709 $\pm$ 0.013 \\
SGCL
& 0.618 $\pm$ 0.014 & 0.655 $\pm$ 0.028
& 0.618 $\pm$ 0.010 & 0.657 $\pm$ 0.034 \\
VGAE
& 0.723 $\pm$ 0.016 & 0.734 $\pm$ 0.011
& 0.703 $\pm$ 0.013 & 0.719 $\pm$ 0.013 \\
COLES
& 0.606 $\pm$ 0.015 & 0.648 $\pm$ 0.025
& 0.619 $\pm$ 0.014 & 0.656 $\pm$ 0.032 \\
CCA-SSG
& 0.581 $\pm$ 0.019 & 0.637 $\pm$ 0.008
& 0.597 $\pm$ 0.013 & 0.654 $\pm$ 0.031 \\
MVGRL
& 0.518 $\pm$ 0.048 & 0.610 $\pm$ 0.039
& 0.516 $\pm$ 0.049 & 0.653 $\pm$ 0.043 \\
\bottomrule
\end{tabular}
\caption*{}
\end{subtable}
\caption{Downstream classification performance averaged across all small to medium sized datasets.
The best and second best performances are highlighted in bold and underlined respectively.}
\label{tab:cont_results_combined}
\end{table}

\begin{table}
\centering
\scriptsize
\begin{tabular}{l | rr | rr | rr | rr}
\toprule
   & \multicolumn{2}{c}{\textbf{Logistic}}
   & \multicolumn{2}{c}{\textbf{GAT}}
   & \multicolumn{2}{c}{\textbf{GCN}}
   & \multicolumn{2}{c}{\textbf{GraphSAGE}}\\
\multicolumn{1}{c}{\textbf{Embedding}} \\
[-1.2em]   
\cmidrule(lr){2-3}
\cmidrule(lr){4-5}
\cmidrule(lr){6-7}
\cmidrule(lr){8-9}

   & Acc & F1 & Acc & F1 & Acc & F1 & Acc & F1 \\

\midrule

DeepWalk 
& 0.500 & 0.502 
& \textbf{0.627} & \textbf{0.665} & \textbf{0.699} & \textbf{0.708} & \textbf{0.658} & \textbf{0.680}\\

DGI 
& 0.499 & 0.518 & 0.504 & 0.668 & 0.499 & 0.446
& 0.504 & 0.649\\

Cont$_\text{FUSE}$  
& \textbf{0.506} & \textbf{0.574} & \uline{0.626} & \uline{0.668} & \dashuline{0.644} & \dashuline{0.668}
& \dashuline{0.634} & \dashuline{0.659}\\

Node2Vec 
& 0.499 & 0.505 & 0.615 & 0.656 & \uline{0.663} & \uline{0.679}
& \uline{0.641} & \uline{0.666}\\

CCA-SSG 
& 0.502 & 0.506 & 0.504 & 0.638 & 0.507 & 0.663
& 0.509 & 0.652\\

Random 
& 0.499 & 0.504 & 0.508 & 0.552 & 0.511 & 0.609
& 0.506 & 0.590\\

Given 
& 0.498 & 0.500 & 0.515 & 0.625 & 0.627 & 0.655
& 0.507 & 0.663\\

\bottomrule
\end{tabular}
\caption{Classification performances for ArXiV. The best performances have been indicated in bold while the second best and the third best are underlined and dashed line respectively.}
\label{tab:arxiv_results}
\end{table}

\begin{table}[htbp]
\centering
\begin{minipage}[t]{0.35\textwidth}
\centering
\begin{tabular}{l r}
\toprule
\textbf{Embedding} & \textbf{Runtime (s)} \\
\midrule
DeepWalk & 121.69 $\pm$ 2.841 \\
DGI      & 8.39 $\pm$ 0.221 \\
Cont$_\text{FUSE}$ & 14.16 $\pm$ 0.285 \\
GRACE    & 309.25 $\pm$ 4.874 \\
Node2Vec & 118.17 $\pm$ 3.009 \\
Random   & 0.03 $\pm$ 0.004 \\
SGCL     & 91.05 $\pm$ 0.622 \\
VGAE     & 95.62 $\pm$ 2.440 \\
COLES    & 46.51 $\pm$ 0.687 \\
CCA-SSG & 24.12 $\pm$ 0.091 \\
MVGRL   &  9822.58 $\pm$ 49.775 \\
\bottomrule
\end{tabular}
\captionof{table}{Runtimes across baselines averaged across small to medium sized datasets.}
\label{tab:data_runtimes}
\end{minipage}
\hspace{0.05\textwidth}
\begin{minipage}[t]{0.35\textwidth}
\centering
\begin{tabular}{l r}
\toprule
\textbf{Embedding} & \textbf{Runtime (s)} \\
\midrule
DeepWalk & 4173.83 \\
DGI      & 125.45 \\
Cont$_\text{FUSE}$ & 298.66 \\
Node2Vec & 4051.59 \\
CCA-SSG & 427.22 \\
Random   & 0.53 \\
\bottomrule
\end{tabular}
\captionof{table}{Runtimes for ArXiV.}
\label{tab:arxiv_runtimes}
\end{minipage}
\end{table}

Table~\ref{tab:cont_results_combined} reports downstream classification performance averaged across
Cora, CiteSeer, PubMed, WikiCS, and Amazon Photo.
Contrastive FUSE consistently achieves the best or near-best performance
across all classifiers. It is to be noted that our algorithm Contrastive FUSE has hereby been denoted as Cont$_\text{FUSE}$ throughout the tables and figures. Baselines like COLES and MVGRL failed to run on larger datasets like ArXiV and showed a memory error.

Under Logistic Regression, Contrastive FUSE improves both accuracy and macro-F1 compared
to unsupervised baselines, demonstrating the benefit of incorporating
pairwise supervision at the embedding stage.
When combined with GNN classifiers (GCN, GAT, GraphSAGE), Contrastive FUSE either matches
or outperforms all competing methods, highlighting its compatibility with
message-passing architectures.

Table~\ref{tab:arxiv_results} presents scalability results on the OGBN-ArXiv dataset.
Despite using fewer computationally expensive operations than other
contrastive methods, Contrastive FUSE achieves competitive performance across all
classifiers.
In particular, Contrastive FUSE maintains strong performance under both shallow
(Logistic Regression) and deep (GCN/GAT/GraphSAGE) classifiers, confirming its
robustness at scale. Further analysis for the OGBN-Products dataset can be found in Appendix~\ref{app:scalability}.

\subsection{Runtime Efficiency}
\label{subsubsec:runtime}

Runtime comparisons are reported in Table~\ref{tab:data_runtimes} for medium-scale datasets and
Table~\ref{tab:arxiv_runtimes} for OGBN-ArXiv.
On medium-scale graphs, Contrastive FUSE is significantly faster than contrastive
baselines such as GRACE and SGCL, while remaining competitive with DGI (which takes lesser time than Cont$_\text{FUSE}$ but compromises in performance).

On OGBN-ArXiv, Contrastive FUSE achieves a favorable trade-off between performance and
efficiency: it is approximately 13--14$\times$ faster than random-walk–based
methods such as DeepWalk and Node2Vec, with only a modest 3--4\% reduction in
classification performance.
Several baselines fail to scale to ArXiv due to memory constraints, whereas Contrastive
FUSE remains tractable.

Overall, these results demonstrate that Contrastive FUSE provides a strong
balance between predictive performance, scalability, and computational
efficiency.

\subsection{Ablation Study}
\label{subsec:ablation}

To assess the contribution of contrastive supervision in Contrastive FUSE, we
perform an ablation study by removing the contrastive term and optimizing only
the unsupervised modularity objective (i.e., setting $\lambda = 0$). The resulting
embeddings are evaluated using the same downstream classifiers and evaluation
protocols as the full model. Across both medium-scale datasets and the large
OGBN-ArXiv graph, the unsupervised variant consistently underperforms the full
model, highlighting the importance of contrastive pairwise supervision for
learning discriminative node representations.  
\textit{For a detailed quantitative analysis and full results, please refer to
Appendix~\ref{app:ablation}.}

\subsection{Sensitivity Analysis}
\label{subsec:sensitivity}

We conduct a sensitivity analysis to examine the robustness of Contrastive FUSE
with respect to its key hyperparameters under varying numbers of contrastive
pairs. The proposed adaptive scaling strategy enables effective
optimization across datasets of different sizes and graph densities.  
\textit{A detailed breakdown of hyperparameter settings and results is provided
in Appendix~\ref{app:sensitivity}.}

\subsection{Scalability Experiments}
\label{subsec:scalability}

We evaluate the scalability of Contrastive FUSE on large-scale graphs and
compare its performance against representative baseline methods. The results
show that Contrastive FUSE achieves competitive predictive performance while
substantially reducing runtime, demonstrating a favorable trade-off between
accuracy and efficiency. These findings indicate that the proposed framework
scales effectively to large real-world graphs.  
\textit{Comprehensive scalability results and runtime analyses are reported in
Appendix~\ref{app:scalability}.}

\subsection{Visualizations}
\label{sec:visualizations}
Figure~\ref{fig:contrastive_fuse_pipeline} illustrates the learning pipeline of
Contrastive FUSE. The method constructs two complementary signals in parallel:
a structural signal derived from the graph adjacency through the modularity
matrix, and a contrastive signal derived from labeled node pairs via a signed
contrastive Laplacian. These signals produce corresponding gradients that
capture community-level structure and pairwise similarity or dissimilarity,
respectively. The gradients are linearly combined and used to update the node
embeddings through a gradient ascent step, followed by row-wise normalization
to enforce unit-norm constraints. Additionally we have plotted the accuracy and F1 scores averaged across the datasets for small to medium sized datasets along with their runtimes in Fig~\ref{fig:all_classifiers_vertical} and Fig~\ref{fig:runtime_comparison} respectively which shows the applicability of Contrastive FUSE across various domains.

\begin{figure}[htbp]
    \centering
    \begin{subfigure}{0.9\textwidth}
        \centering
        \includegraphics[width=\linewidth]{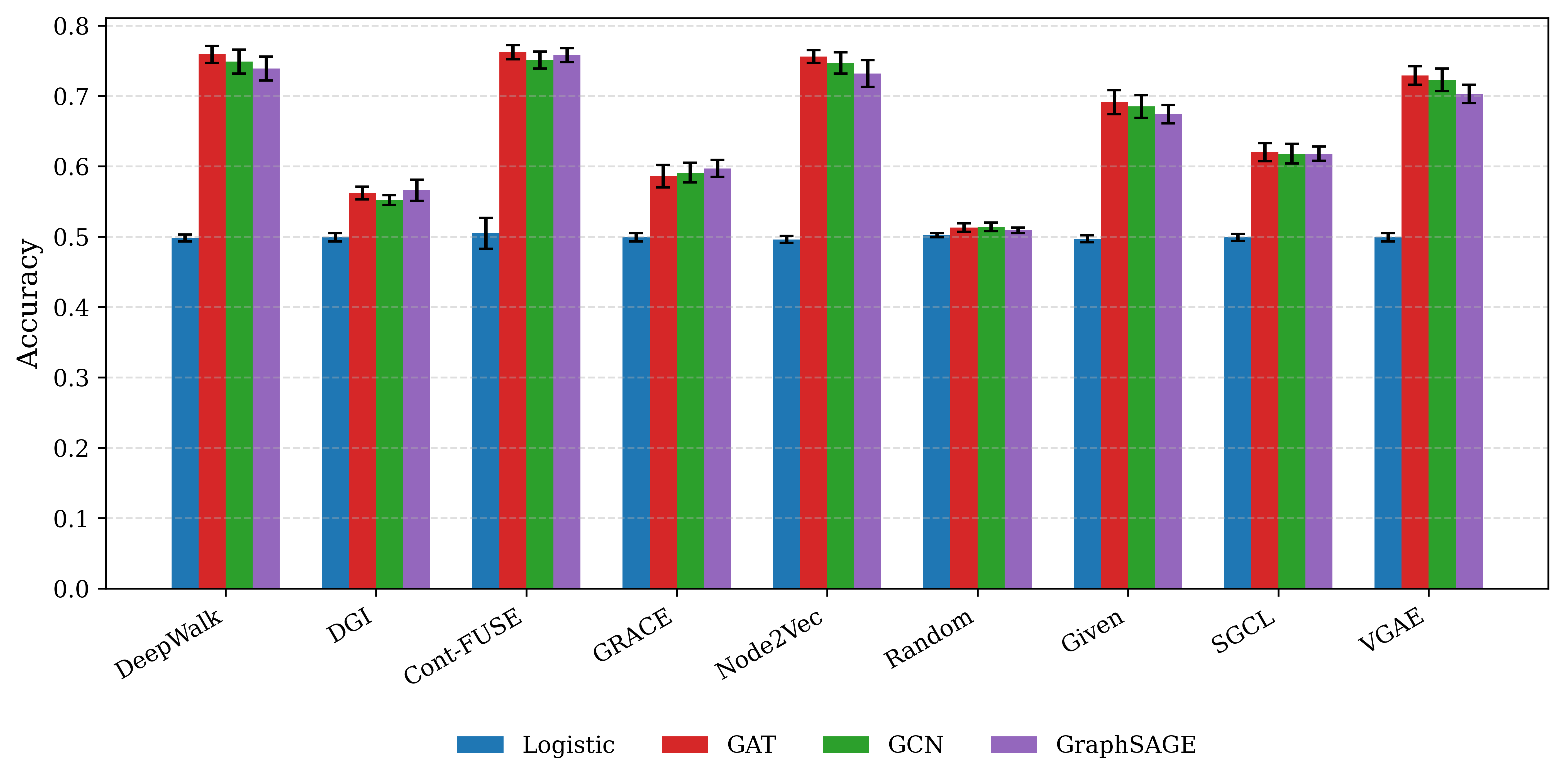}
        \caption{Accuracy across classifiers}
        \label{fig:all_classifiers_accuracy}
    \end{subfigure}
   
    \begin{subfigure}{0.9\textwidth}
        \centering
        \includegraphics[width=\linewidth]{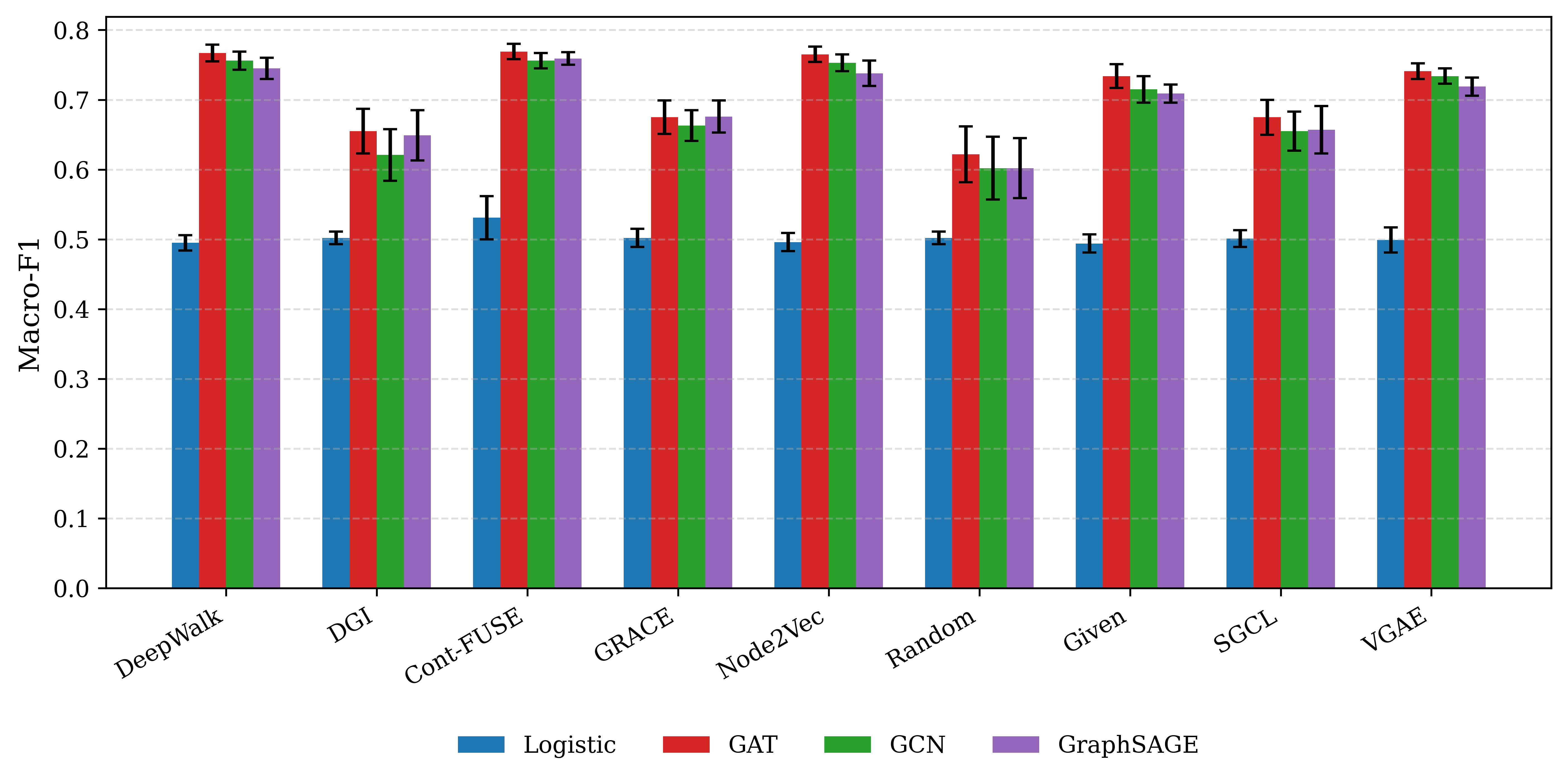}
        \caption{Macro-F1 across classifiers}
        \label{fig:all_classifiers_f1}
    \end{subfigure}
    \caption{
    Downstream classification performance of different embedding methods
    evaluated using Logistic Regression, GAT, GCN, and GraphSAGE.
    Accuracy (a) and Macro-F1 (b) are reported across all
    small-to-medium-sized benchmark datasets.
    }
    \label{fig:all_classifiers_vertical}
\end{figure}

\begin{figure}[htbp]
    \centering
    \includegraphics[width=0.9\linewidth]{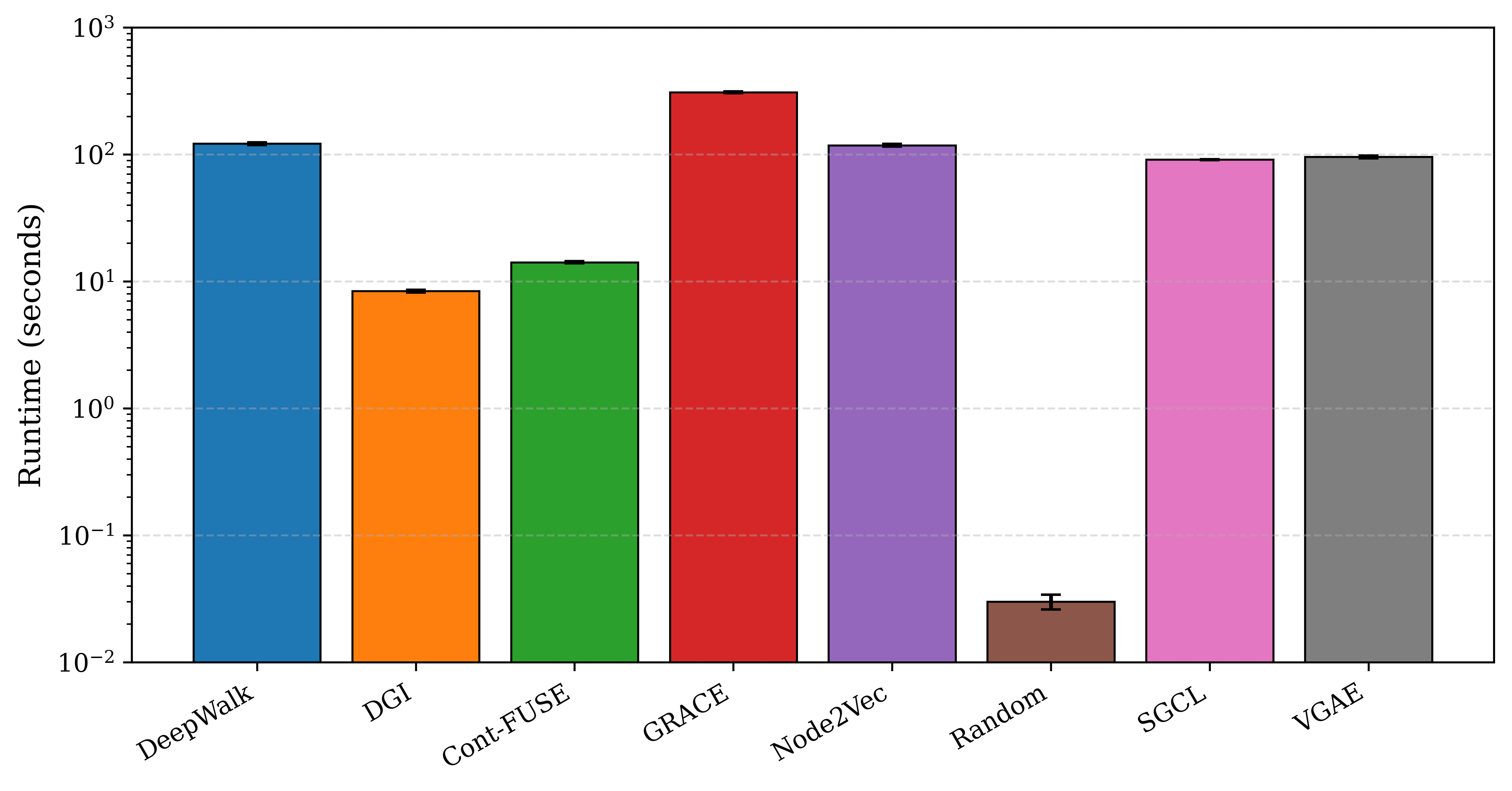}
    \caption{
    Runtime comparison across embedding methods on a logarithmic
    ($10^k$) scale.
    $Cont_\textit{FUSE}$ achieves strong efficiency while remaining competitive
    with significantly more expensive baselines.
    }
    \label{fig:runtime_comparison}
\end{figure}

\begin{figure}[htbp]
\centering

\begin{tikzpicture}[
    scale=0.70,
    transform shape,
    node/.style={
        rounded corners,
        draw=blue!50,
        thick,
        fill=blue!10,
        minimum width=5.3cm,
        minimum height=1.1cm,
        align=center
    },
    big/.style={
        rounded corners,
        draw=blue!50,
        thick,
        fill=blue!10,
        minimum width=6.2cm,
        minimum height=1.1cm,
        align=center
    },
    arrow/.style={->, thick},
]

\node[node] (G) {\textbf{Input Graph $A$} \\ \small(adjacency, degrees)};
\node[node, right=1.0cm of G] (Pairs)
{\textbf{Contrastive Pairs $(i,j,y_{ij})$} \\ \small(+1 same, -1 diff)};

\node[node, below=0.6cm of G] (B)
{\textbf{Modularity Matrix} \\[2pt] $\widetilde B = A - \dfrac{d1^\top}{2m}$};

\node[node, below=0.6cm of Pairs] (Lc)
{\textbf{Signed Contrastive Laplacian} \\[2pt]
$L_c = I - D_c^{-1/2} Y D_c^{-1/2}$};

\node[node, below=0.6cm of B] (Gmod)
{\textbf{Structural Gradient} \\[2pt] $G_{\text{mod}} = \widetilde B S$};

\node[node, below=0.6cm of Lc] (Gcon)
{\textbf{Contrastive Gradient} \\[2pt] $G_{\text{con}} = -L_c S$};

\coordinate (MidGrad) at ($(Gmod)!0.5!(Gcon)$);

\node[big, below=1.5cm of MidGrad] (Grad)
{
\textbf{Combine Gradients} \\[3pt]
$G = G_{\text{mod}} + \lambda G_{\text{con}}$
};

\node[big, below=0.5cm of Grad] (Update)
{
\textbf{Gradient Ascent Step} \\[2pt]
$\widetilde S = S + \eta\, G$
};

\node[big, below=0.5cm of Update] (Norm)
{
\textbf{Row Normalization (Projection)} \\[2pt]
$S_i \leftarrow \dfrac{\widetilde S_i}{\| \widetilde S_i \|_2}$
};

\node[big, below=0.5cm of Norm] (Out)
{
\textbf{Final Node Embeddings} $S$
};

\draw[arrow] (G) -- (B);
\draw[arrow] (Pairs) -- (Lc);

\draw[arrow] (B) -- (Gmod);
\draw[arrow] (Lc) -- (Gcon);

\draw[arrow] (Gmod) -- (Grad);
\draw[arrow] (Gcon) -- (Grad);

\draw[arrow] (Grad) -- (Update);
\draw[arrow] (Update) -- (Norm);
\draw[arrow] (Norm) -- (Out);

\end{tikzpicture}
\caption{Learning pipeline of Contrastive FUSE.}
\label{fig:contrastive_fuse_pipeline}

\end{figure}

\section{Theoretical Results}
\label{sec:theory}

\begin{theorem}[Lipschitz Smoothness of the Contrastive FUSE Objective]
\label{thm:lipschitz}
Let
\[
J(S)
=
\mathrm{Tr}(S^\top \widetilde B S)
-
\lambda\,\mathrm{Tr}(S^\top L_c S),
\]
where $\widetilde B \in \mathbb{R}^{n\times n}$ is the (symmetric) modularity matrix of the
graph and $L_c \in \mathbb{R}^{n\times n}$ is the symmetric signed contrastive
Laplacian. Then the gradient $\nabla J(S)$ is Lipschitz continuous with respect
to the Frobenius norm. In particular, for all $S,S' \in \mathbb{R}^{n\times k}$,
\[
\|\nabla J(S) - \nabla J(S')\|_F
\;\le\;
L \,\|S - S'\|_F,
\qquad
L = 2\,\|B - \lambda L_c\|_2,
\]
where $\|\cdot\|_2$ denotes the spectral norm. Consequently, $J$ is an
$L$-smooth function.
\end{theorem}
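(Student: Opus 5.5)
The plan is to compute the gradient of $J$ in closed form, observe that it is a linear map in $S$, and bound its operator norm in Frobenius norm by the spectral norm of the matrix that defines it. Throughout I take the theorem's assumption that $\widetilde B$ is symmetric, and I read $B$ in the constant $L$ as the symmetric matrix $\widetilde B$ appearing in $J$. Note that the matrix $\widetilde B = A - d\mathbf{1}^\top/(2m)$ defined earlier is not symmetric in general. If one wants to avoid this hypothesis, the same argument goes through with $M$ below replaced by its symmetric part $\tfrac12(\widetilde B+\widetilde B^\top)$. That is the matrix that actually enters the quadratic form, since $\mathrm{Tr}(S^\top \widetilde B S) = \mathrm{Tr}(S^\top \widetilde B^\top S)$.

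\textbf{Step 1 (gradient).} Set $M := \widetilde B - \lambda L_c$, which is symmetric because $L_c$ is symmetric ($Y$ is symmetric and $D_c$ is diagonal). Then $J(S) = \mathrm{Tr}(S^\top M S)$. Use the standard identity $\nabla_S \mathrm{Tr}(S^\top M S) = (M + M^\top)S$, which follows by expanding $J(S+H) - J(S) = \mathrm{Tr}(H^\top (M+M^\top) S) + \mathrm{Tr}(H^\top M H)$ and reading off the linear term in the Frobenius inner product. This gives $\nabla J(S) = 2MS$.

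\textbf{Step 2 (Lipschitz bound).} By linearity, $\nabla J(S) - \nabla J(S') = 2M(S-S')$. Write $\Delta = S - S'$ with columns $\Delta_{:,1},\dots,\Delta_{:,k}$. Then
\[
\|M\Delta\|_F^2 = \sum_{\ell=1}^k \|M\Delta_{:,\ell}\|_2^2 \le \|M\|_2^2 \sum_{\ell} \|\Delta_{:,\ell}\|_2^2 = \|M\|_2^2\|\Delta\|_F^2 .
\]
Hence $\|\nabla J(S)-\nabla J(S')\|_F \le 2\|M\|_2\,\|S-S'\|_F$, which is the claimed constant $L = 2\|\widetilde B - \lambda L_c\|_2$. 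The constant is finite because the matrices are finite-dimensional. If some $D_{c,ii}=0$, I would interpret $D_c^{-1/2}$ as the pseudo-inverse on those entries, so that $L_c$ is still a well-defined finite matrix.

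\textbf{Step 3 ($L$-smoothness).} $L$-smoothness of $J$ is by definition this gradient Lipschitz property. The descent-lemma form $|J(S') - J(S) - \langle \nabla J(S), S'-S\rangle| \le \tfrac{L}{2}\|S'-S\|_F^2$ follows directly from the exact expansion in Step 1, since $|\mathrm{Tr}(\Delta^\top M \Delta)| \le \|M\|_2\|\Delta\|_F^2$. No step is a real obstacle.
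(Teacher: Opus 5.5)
Your proof is correct and follows the paper's argument: write $J(S)=\mathrm{Tr}(S^\top M S)$ with $M=\widetilde B-\lambda L_c$, take $\nabla J(S)=2MS$, and bound $\|2M(S-S')\|_F\le 2\|M\|_2\|S-S'\|_F$; your column-wise justification of the last inequality is the step the paper loosely calls submultiplicativity. Your caveat that $\widetilde B = A - d\mathbf{1}^\top/(2m)$ is not symmetric in general is well taken, but the stated constant holds anyway: without symmetry $\nabla J(S)=(M+M^\top)S$, and $\|M+M^\top\|_2\le 2\|M\|_2$.
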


\begin{proof}
We begin by observing that both $\widetilde B$ and $L_c$ are symmetric matrices.
Define
\[
M := \widetilde B - \lambda L_c.
\]
The objective function can then be written compactly as
\[
J(S) = \mathrm{Tr}(S^\top M S).
\]

A standard result from matrix calculus states that for any symmetric matrix
$M$, the gradient of $\mathrm{Tr}(S^\top M S)$ with respect to $S$ is given by
\[
\nabla_S \,\mathrm{Tr}(S^\top M S) = 2 M S.
\]
Applying this identity yields
\[
\nabla J(S) = 2 M S.
\]

For any two matrices $S,S' \in \mathbb{R}^{n\times k}$, the difference of the
gradients can therefore be expressed as
\[
\nabla J(S) - \nabla J(S') = 2M(S - S').
\]
Taking the Frobenius norm on both sides and using the submultiplicativity of
the spectral norm, we obtain
\[
\|\nabla J(S) - \nabla J(S')\|_F
=
2\|M(S - S')\|_F
\;\le\;
2\|M\|_2 \, \|S - S'\|_F.
\]

Substituting back $M = \widetilde B - \lambda L_c$ gives
\[
\|\nabla J(S) - \nabla J(S')\|_F
\;\le\;
2\|\widetilde B - \lambda L_c\|_2 \, \|S - S'\|_F.
\]
Hence, the gradient of $J$ is Lipschitz continuous with Lipschitz constant
$L = 2\|B - \lambda L_c\|_2$, which establishes the claimed smoothness.
\end{proof}

\begin{theorem}[Directional Stability of Modularity Gradients]
\label{thm:row_norm_directional_stability_minimal}

Let $G=(V,E)$ be an undirected graph with $|V|=n$, $|E|=m$, adjacency matrix $A$,
and degree vector $d=(d_1,\dots,d_n)^\top$.
Let $S\in\mathbb{R}^{n\times k}$ be an embedding matrix with rows $S_i$.

Define
\[
G_{\mathrm{true}} = AS - \frac{dd^\top S}{2m},
\qquad
G_{\mathrm{approx}} = AS - \frac{d1^\top S}{2m}.
\]

Assume:
\begin{itemize}
\item[(i)] $\|S_i\|_2 = 1$ for all $i$;
\item[(ii)] $\mathbb{E}[\langle S_i,S_j\rangle]=0$ and
$\mathrm{Var}(\langle S_i,S_j\rangle)=\mathcal{O}(1/k)$ for $i\neq j$;
\item[(iii)] The second Zagreb index satisfies
\[
M_2(G) := \sum_{(i,j)\in E} d_i d_j
\;\ge\;
c\,\frac{\|d\|_2^4}{m}
\quad\text{for some constant } c>0.
\]
\end{itemize}

Then
\[
\cos(G_{\mathrm{true}},G_{\mathrm{approx}})
\ge
1 - \mathcal{O}\bigg(\frac{1}{\sqrt{m}}+\frac{n}{\|d\|_2\sqrt{m}}\bigg).
\]
\end{theorem}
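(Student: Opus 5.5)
We write $G_{\mathrm{true}} = X - U$ and $G_{\mathrm{approx}} = X - V$, where $X := AS$, $U := \frac{1}{2m} d\,(d^\top S)$ and $V := \frac{1}{2m} d\,(\mathbf{1}^\top S)$. The two gradients differ only in a rank-one correction, $E := G_{\mathrm{approx}} - G_{\mathrm{true}} = U - V = \frac{1}{2m} d\,\big((d-\mathbf{1})^\top S\big)$. We then use the elementary bound
\[
\cos(G_{\mathrm{true}},G_{\mathrm{approx}}) \;\ge\; 1 - \frac{2\|E\|_F}{\|G_{\mathrm{true}}\|_F},
\]
which holds because $\|a\|\|b\|\cos(a,b) = \|a\|^2 + \langle a, b-a\rangle$ and $\|b\| \le \|a\| + \|b-a\|$; its expansion gives $1 - O(\|E\|/\|a\|)$. (If needed, one can instead work with $\|E\|_F/\|X\|_F$ plus the analogous bound for $U$.) So the plan is to control the numerator $\|E\|_F$ from above and the denominator $\|G_{\mathrm{true}}\|_F$ from below, both in expectation or second moment under assumption (ii), and to upgrade these to high probability via Chebyshev if needed.

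\textbf{Upper bound on $\|E\|_F$.} Since $E$ is rank one, $\|E\|_F = \frac{\|d\|_2}{2m}\,\|(d-\mathbf{1})^\top S\|_2$. Expanding gives
\[
\|w^\top S\|_2^2 = \sum_i w_i^2 + \sum_{i\ne j} w_i w_j \langle S_i,S_j\rangle, \qquad w = d - \mathbf{1}.
\]
By (i) the first sum is $\|w\|_2^2$, and by (ii) the cross terms have mean zero, so $\mathbb{E}\|w^\top S\|^2 = \|w\|^2 \le \|d\|^2 + n$. Hence, typically, $\|E\|_F \lesssim \frac{\|d\|_2(\|d\|_2 + \sqrt n)}{2m}$.

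\textbf{Lower bound on $\|G_{\mathrm{true}}\|_F$.} First, $\|G_{\mathrm{true}}\|_F \ge \|AS\|_F - \|U\|_F$, and by the same computation $\|U\|_F \approx \|d\|^3/(2m)$ only in the worst case, while typically $\|d^\top S\| \approx \|d\|$, so $\|U\|_F \approx \|d\|^2/(2m)$. Next,
\[
\|AS\|_F^2 = \sum_i \Big\|\sum_{j\sim i} S_j\Big\|^2 = \sum_i d_i + \sum_i \sum_{j\ne l \sim i} \langle S_j,S_l\rangle,
\]
whose mean is $\sum_i d_i = 2m$, so $\|AS\|_F \approx \sqrt{2m}$. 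I would also use assumption (iii) here. The intended role of the Zagreb condition is to certify that the signal $\langle AS, \cdot\rangle$ correlates with the degree direction: $\mathrm{Tr}(S^\top A S)$-type terms weighted by $d$ involve $\sum_{(i,j)\in E} d_i d_j$. Concretely, I would lower-bound $\|G_{\mathrm{true}}\|_F$ by its projection onto the $d$-direction, $\frac{1}{\|d\|}\|d^\top G_{\mathrm{true}}\|$, and relate $\|d^\top A S\|^2$ to $M_2(G)$ plus mean-zero fluctuations, so that (iii) yields a lower bound of order $\|d\|^2/\sqrt m$.

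\textbf{Combining the bounds.} Dividing the two estimates, the $\|d\|^2/m$ part over a denominator of order $\sqrt m$ or $\|d\|^2/\sqrt m$ gives the $1/\sqrt m$ term, and the $\|d\|\sqrt n/m$ part gives $n/(\|d\|\sqrt m)$ after using $\|d\|^2 \ge 4m^2/n$ (Cauchy--Schwarz). The main obstacle is the lower bound on $\|G_{\mathrm{true}}\|_F$. One must show that the subtraction of $U$ does not cancel $AS$; this is exactly where (iii) must enter, and the variance terms $O(1/k)$ from (ii) must be shown negligible. The $O(\cdot)$ statement should be read as holding in expectation or with high probability over the embedding distribution. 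I would first try the projection-onto-$d$ argument, and fall back on a crude $\|AS\|_F - \|U\|_F$ bound in regimes where $\|d\|^2 \ll m^{3/2}$.
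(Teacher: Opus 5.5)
\textbf{There is a genuine gap, and it is the step you flagged: the lower bound on the denominator.} Your numerator estimate is fine; in the typical case it is even sharper than the paper's, with $\sqrt n$ where the paper has $n$. But to turn $\|E\|_F \lesssim \|d\|_2(\|d\|_2+\sqrt n)/m$ into the $1/\sqrt m$ term, you need $\|G_{\mathrm{true}}\|_F \gtrsim \|d\|_2^2/\sqrt m$.

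The other denominator you mention, $\sqrt m$, does not give $1/\sqrt m$. It gives order $\|d\|_2^2/m^{3/2} \ge 2\bar d/\sqrt m$ with $\bar d = 2m/n$, so your fallback does not reach the stated rate.

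The needed bound is also not available. Write $B = A - dd^\top/(2m)$, so that $G_{\mathrm{true}} = BS$ and $B\mathbf{1} = 0$. Your projection $d^\top G_{\mathrm{true}} = (Bd)^\top S$ is then identically zero for every regular graph, and every regular graph satisfies (iii) with $c = 1/4$. More generally, (i)--(ii) give
\[
\mathbb{E}\,\|G_{\mathrm{true}}\|_F^2 \;=\; \|B\|_F^2 \;=\; 2m - \frac{2M_2(G)}{m} + \frac{\|d\|_2^4}{4m^2}.
\]
The Zagreb index enters with a minus sign, so (iii) can only shrink the denominator, never certify it. For a $\delta$-regular graph this expectation is $n\delta-\delta^2$, against the required $\|d\|_2^4/m = 2n\delta^3$.

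In fact no repair is possible, because the conclusion itself is false. Take $K_n$, so $d=(n-1)\mathbf{1}$, $2m=n(n-1)$, and $A = nP - I$ with $P=\frac{1}{n}\mathbf{1}\mathbf{1}^\top$. Then $G_{\mathrm{true}} = -(I-P)S$ and $G_{\mathrm{approx}} = (n-2)PS-(I-P)S$, and the two summands of $G_{\mathrm{approx}}$ are Frobenius-orthogonal. Hence
\[
\cos(G_{\mathrm{true}},G_{\mathrm{approx}})^2 \;=\; \frac{n-q}{\,n-q+(n-2)^2q\,}, \qquad q:=\frac{\|\mathbf{1}^\top S\|_2^2}{n}.
\]
Under (i)--(ii), $\mathbb{E}q=1$, so typically $\cos\approx n^{-1/2}$. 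The theorem, with $c=1/4$, promises $\cos \ge 1-O(1/n)$. For a general $\delta$-regular graph the same computation gives $1-\cos\approx \delta/(2n)$, which violates the claimed rate once $\delta\gg n^{2/5}$.

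The paper's proof has the same skeleton as yours: a rank-one difference, division by a lower bound on $\|AS\|_F$, then (iii). It gets its denominator $\sqrt{M_2(G)}\gtrsim \|d\|_2^2/\sqrt m$ by asserting $\|AS\|_F^2\gtrsim M_2(G)$ ``under isotropy''. Your own correct computation $\mathbb{E}\|AS\|_F^2=\sum_i d_i = 2m$ shows that step is invalid, and its use of $\|G_{\mathrm{approx}}\|_F\ge\|AS\|_F$ is also unjustified.

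What your ingredients legitimately support is a graph-dependent bound $1-\cos\le 2\|E\|_F/\|G_{\mathrm{true}}\|_F$. Its typical sizes are $\|E\|_F\approx\|d\|_2\|d-\mathbf{1}\|_2/(2m)$ and $\|G_{\mathrm{true}}\|_F\approx\|B\|_F$. This is small only in sparse, low-heterogeneity regimes, which is consistent with the paper's reported cosine of $0.995$ on Cora.
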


\begin{proof}
Subtracting the two gradients gives
\[
\Delta G
=
G_{\mathrm{true}} - G_{\mathrm{approx}}
=
\frac{dd^\top S - d1^\top S}{2m}
=
\frac{d(d^\top S - 1^\top S)}{2m}.
\]

Using $\|uv^\top X\|_F \le \|u\|_2\|v^\top X\|_2$, we obtain
\[
\|\Delta G\|_F
\le
\frac{1}{2m}\|d\|_2\|d^\top S - 1^\top S\|_2.
\tag{1}
\]

Since $\|S_i\|_2=1$ for all $i$,
\[
\|1^\top S\|_2
=
\left\|\sum_{i=1}^n S_i\right\|_2
\le
\sum_{i=1}^n \|S_i\|_2
=
n.
\tag{2}
\]

Next, expand
\[
\|d^\top S\|_2^2
=
\sum_i d_i^2\|S_i\|_2^2
+
\sum_{i\neq j} d_i d_j \langle S_i,S_j\rangle.
\]
The diagonal term equals $\|d\|_2^2$.
By isotropy, the cross terms have zero mean and concentrate around zero.
Hence, with high probability,
\[
\|d^\top S\|_2
=
\mathcal{O}(\|d\|_2).
\tag{3}
\]

Combining (2) and (3),
\[
\|d^\top S - 1^\top S\|_2
\le
\|d^\top S\|_2 + \|1^\top S\|_2
=
\mathcal{O}(\|d\|_2) + \mathcal{O}(n).
\]

Substituting into (1) yields
\[
\|\Delta G\|_F
=
\mathcal{O}\!\left(\frac{\|d\|_2^2}{m}\right)
+
\mathcal{O}\!\left(\frac{n\|d\|_2}{m}\right).
\tag{4}
\]

Now consider the dominant structural term $AS$.
A direct expansion gives
\[
\|AS\|_F^2
=
\sum_{i=1}^n
\sum_{j,\ell\in N(i)} \langle S_j,S_\ell\rangle.
\]
Under isotropy, off-diagonal contributions cancel in expectation, yielding
\[
\|AS\|_F^2 \;\gtrsim\; \sum_{(j,\ell)\in E} d_j d_\ell = M_2(G).
\]
Thus,
\[
\|AS\|_F \;\gtrsim\; \sqrt{M_2(G)}.
\tag{5}
\]

Using $\|G_{\mathrm{approx}}\|_F \ge \|AS\|_F$ and the inequality
\[
\cos(U,V) \ge 1 - \frac{\|U-V\|_F}{\|V\|_F},
\]
we obtain
\[
\cos(G_{\mathrm{true}},G_{\mathrm{approx}})
\ge
1 -
\frac{\|\Delta G\|_F}{\|AS\|_F}.
\]

Substituting (4) and (5),
\[
\cos(G_{\mathrm{true}},G_{\mathrm{approx}})
\ge
1 -
\mathcal{O}\!\left(
\frac{\|d\|_2^2}{m\sqrt{M_2(G)}}
+
\frac{n\|d\|_2}{m\sqrt{M_2(G)}}
\right).
\]

By assumption (iii),
\[
m\sqrt{M_2(G)} \;\gtrsim\; \|d\|_2^2\sqrt{m}.
\]
So,
\[
\cos(G_{\mathrm{true}},G_{\mathrm{approx}})
\ge
1 - \mathcal{O}\bigg(\frac{1}{\sqrt{m}}+\frac{n}{\|d\|_2\sqrt{m}}\bigg).
\]
This is positive for all sufficiently large $m$.
\end{proof}

\begin{theorem}[Edge Lower Bound for Effective Zagreb Control]
\label{thm:edge_bound_zagreb}

Let $G=(V,E)$ be an undirected graph with $|V|=n$, $|E|=m$, degree vector
$d=(d_1,\dots,d_n)^\top$, and second Zagreb index
\[
M_2(G) := \sum_{(i,j)\in E} d_i d_j.
\]
Assume that there exists a constant $c>0$ such that
\[
M_2(G) \;\ge\; c\,\frac{\|d\|_2^4}{m}.
\tag{A3}
\]

Then a sufficient condition for the error terms
\[
\frac{\|d\|_2^2}{m\sqrt{M_2(G)}}
\qquad\text{and}\qquad
\frac{n\|d\|_2}{m\sqrt{M_2(G)}}
\]
to be $\mathcal{O}(m^{-1/2})$ is
\[
\boxed{
m
\;\ge\;
\frac{1}{c}
\left(
1 + \frac{n}{\|d\|_2}
\right)^2.
}
\]

In particular, under this condition the directional cosine bound
\[
\cos(G_{\mathrm{true}},G_{\mathrm{approx}})
\;\ge\;
1 - \mathcal{O}\!\left(
\frac{1}{\sqrt{m}}
+
\frac{n}{\|d\|_2\sqrt{m}}
\right)
\]
is nonnegative.
\end{theorem}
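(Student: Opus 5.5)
The plan is to substitute (A3) into the two error terms, reduce each to an explicit multiple of $m^{-1/2}$, and then read off what the boxed inequality on $m$ buys for the cosine bound of Theorem~\ref{thm:row_norm_directional_stability_minimal}.

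\textbf{Step 1: the first error term.} Assumption (A3) gives
\[
\sqrt{M_2(G)}\;\ge\;\sqrt{c}\,\frac{\|d\|_2^2}{\sqrt{m}}.
\]
Dividing through, the first error term satisfies
\[
\frac{\|d\|_2^2}{m\sqrt{M_2(G)}}\;\le\;\frac{\|d\|_2^2\sqrt{m}}{\sqrt{c}\,m\,\|d\|_2^2}\;=\;\frac{1}{\sqrt{c}\,\sqrt{m}}.
\]
So this term is already $\mathcal{O}(m^{-1/2})$, with constant $c^{-1/2}$.

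\textbf{Step 2: the second error term.} The same substitution gives
\[
\frac{n\|d\|_2}{m\sqrt{M_2(G)}}\;\le\;\frac{1}{\sqrt{c}\,\sqrt{m}}\cdot\frac{n}{\|d\|_2}.
\]
Adding the two bounds, the total error is at most
\[
E(m)\;:=\;\frac{1}{\sqrt{c\,m}}\left(1+\frac{n}{\|d\|_2}\right).
\]
This recovers exactly the shape $\frac{1}{\sqrt m}+\frac{n}{\|d\|_2\sqrt m}$ appearing in Theorem~\ref{thm:row_norm_directional_stability_minimal}.

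\textbf{Step 3: role of the boxed condition.} The inequality $m\ge \frac1c\left(1+\frac{n}{\|d\|_2}\right)^2$ is equivalent to $E(m)\le 1$. It therefore guarantees that the explicit error bound is at most $1$, and hence that $1-E(m)\ge 0$.

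\textbf{Step 4: nonnegativity of the cosine.} I would then go back to the final inequality in the proof of Theorem~\ref{thm:row_norm_directional_stability_minimal}, namely
\[
\cos\ \ge\ 1-\frac{\|\Delta G\|_F}{\|AS\|_F},
\]
and track the constants there so that the error is bounded by a constant multiple of $E(m)$. The cleanest way is to fold that multiple into $c$ by replacing $c$ with an effective constant. Under the boxed condition, the right-hand side is then $\ge 0$.

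\textbf{Main obstacle.} Step 4 is the delicate part, because the earlier theorem only bounds the cosine up to hidden $\mathcal{O}(\cdot)$ constants, coming from the concentration estimates (3) and (5). The honest statement is therefore: the cosine is nonnegative once $m\ge \frac{C}{c}\left(1+\frac{n}{\|d\|_2}\right)^2$, where $C$ absorbs those constants. I would state this explicitly and note that the boxed form corresponds to $C=1$. I would also remark that the boxed condition is sufficient but not necessary for the $\mathcal{O}(m^{-1/2})$ claim itself, since the bounds in Steps 1 and 2 hold for every $m$ with fixed $n/\|d\|_2$; the condition is what makes the resulting cosine bound meaningful, i.e.\ nonnegative.
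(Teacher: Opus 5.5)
Your proposal is correct and matches the paper's proof step for step: take square roots in (A3), bound the two error terms by $\frac{1}{\sqrt{c}\sqrt{m}}$ and $\frac{1}{\sqrt{c}}\frac{n}{\|d\|_2\sqrt{m}}$, and observe that the boxed condition is exactly the requirement that their sum be at most $1$. Your Step~4 caveat is a legitimate sharpening that the paper glosses over: the paper silently takes the hidden $\mathcal{O}$-constant from Theorem~\ref{thm:row_norm_directional_stability_minimal} to be $1$, whereas if that constant is $K$, nonnegativity requires $m \ge \frac{K^2}{c}\bigl(1+\frac{n}{\|d\|_2}\bigr)^2$.
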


\begin{proof}
Starting from Assumption~(A3),
\[
M_2(G) \;\ge\; c\,\frac{\|d\|_2^4}{m},
\]
taking square roots yields
\[
\sqrt{M_2(G)}
\;\ge\;
\sqrt{c}\,\frac{\|d\|_2^2}{\sqrt{m}}.
\tag{1}
\]

We first bound the term
\[
\frac{\|d\|_2^2}{m\sqrt{M_2(G)}}.
\]
Substituting~(1),
\[
\frac{\|d\|_2^2}{m\sqrt{M_2(G)}}
\;\le\;
\frac{\|d\|_2^2}{m \cdot \sqrt{c}\,\|d\|_2^2/\sqrt{m}}
=
\frac{1}{\sqrt{c}}\frac{1}{\sqrt{m}}.
\tag{2}
\]

Next, consider the second term
\[
\frac{n\|d\|_2}{m\sqrt{M_2(G)}}.
\]
Again using~(1),
\[
\frac{n\|d\|_2}{m\sqrt{M_2(G)}}
\;\le\;
\frac{n\|d\|_2}{m \cdot \sqrt{c}\,\|d\|_2^2/\sqrt{m}}
=
\frac{1}{\sqrt{c}}\frac{n}{\|d\|_2\sqrt{m}}.
\tag{3}
\]

Combining~(2) and~(3), the total error term satisfies
\[
\frac{\|d\|_2^2}{m\sqrt{M_2(G)}}
+
\frac{n\|d\|_2}{m\sqrt{M_2(G)}}
\;\le\;
\frac{1}{\sqrt{c}}
\left(
\frac{1}{\sqrt{m}}
+
\frac{n}{\|d\|_2\sqrt{m}}
\right).
\tag{4}
\]

For the right-hand side of~(4) to be at most $1$, it suffices that
\[
\frac{1}{\sqrt{m}}
\left(
1 + \frac{n}{\|d\|_2}
\right)
\;\le\;
\sqrt{c}.
\]
Rearranging yields the stated condition
\[
m
\;\ge\;
\frac{1}{c}
\left(
1 + \frac{n}{\|d\|_2}
\right)^2.
\]

Under this condition, the cosine lower bound remains nonnegative, completing
the proof.
\end{proof}

\begin{remark}[Empirical Validation of the Zagreb Assumption and Edge Bound]
\label{rem:empirical_zagreb}

We empirically evaluate the normalized Zagreb constant
\[
c(G) \;=\; \frac{M_2(G)\,m}{\|d\|_2^4}
\]
and the degree norm $\|d\|_2$ for several standard benchmark graphs.
The results are summarized in Table~\ref{tab:m_bound}.

\begin{table}[htbp]
\centering
\footnotesize
\begin{tabular}{lccccc}
\toprule
Dataset & $n$ & $m$ & $\|d\|_2$ & $c(G)$ & $m_{\textit{min}}$ \\
\midrule
Cora           & $2{,}708$   & $5{,}278$   & $3.39\times 10^2$ & $0.176$ & $459.3$ \\
CiteSeer       & $3{,}327$   & $4{,}552$   & $2.51\times 10^2$ & $0.285$ & $714.9$ \\
PubMed         & $19{,}717$  & $44{,}324$  & $1.22\times 10^3$ & $0.235$ & $1252.6$ \\
WikiCS         & $11{,}701$  & $216{,}123$ & $8.62\times 10^3$ & $0.180$ & $30.9$ \\
Amazon-Photo   & $7{,}650$   & $119{,}081$ & $4.95\times 10^3$ & $0.216$ & $30.0$ \\
OGBN-ArXiv     & $169{,}343$ & $1{,}157{,}799$ & $2.88\times 10^4$ & $0.049$ & $971.9$ \\
\bottomrule
\end{tabular}
\caption{Edge bounds for different datasets used in the experiments.}
\label{tab:m_bound}
\end{table}

Across all datasets, the constant $c(G)$ is bounded away from zero,
with values ranging from approximately $0.05$ to $0.29$.
Moreover, all graphs satisfy $\|d\|_2 \gg n$, indicating substantial
degree heterogeneity.

Substituting these values into the sufficient edge condition from
Theorem~\ref{thm:edge_bound_zagreb},
\[
m
\;\ge\;
\frac{1}{c(G)}
\left(
1 + \frac{n}{\|d\|_2}
\right)^2,
\]
we find that the required number of edges ($m_{\textit{min}}$) is at most on the order of
tens to hundreds, whereas the actual graphs contain between
$10^3$ and $10^6$ edges (Table~\ref{tab:m_bound}).
Consequently, all benchmark datasets lie well within the regime where
directional stability of the modularity gradient approximation is
guaranteed.

These results confirm that Assumption~(iii) is not restrictive in
practice and is satisfied with a wide margin by real-world graphs.
\end{remark}

\section{Discussion and Conclusion}
\label{sec:discussion}

\subsection{Time Complexity Analysis.}
We analyze the computational complexity of Contrastive FUSE per optimization
iteration and over the full training procedure. Let $n=|V|$ be the number of
nodes, $|E|$ the number of edges, $P$ the number of labeled contrastive pairs,
and $k$ the embedding dimension.

The structural (modularity) gradient is computed as $G_{\mathrm{mod}} = \widetilde BS$,
where $\widetilde B = A - \frac{d\mathbf{1}^\top}{2m}$. Since $A$ is sparse with $|E|$
nonzero entries, the matrix--vector multiplication $AS$ requires
$\mathcal{O}(|E|k)$ time. The degree-correction term involves a rank-one
operation and costs $\mathcal{O}(nk)$, which is dominated by the adjacency
multiplication in sparse graphs. Consequently, the overall cost of computing
$G_{\mathrm{mod}}$ is $\mathcal{O}(|E|k)$ per iteration.

The contrastive gradient is given by $G_{\mathrm{con}} = -L_c S$, where
$L_c = I - D_c^{-1/2} Y D_c^{-1/2}$ is the signed contrastive Laplacian. The
matrix $Y$ is sparse and contains exactly $P$ nonzero entries corresponding to
the labeled node pairs. Multiplying $Y$ with $S$ therefore requires
$\mathcal{O}(Pk)$ time, and the diagonal scaling by $D_c^{-1/2}$ incurs only
$\mathcal{O}(nk)$ overhead. As a result, the contrastive term contributes
$\mathcal{O}(Pk)$ time per iteration.

After the gradient update, row-wise normalization of the embedding matrix $S$
requires computing the $\ell_2$ norm of each row and rescaling, which costs
$\mathcal{O}(nk)$ per iteration.

Combining all components, the per-iteration time complexity of Contrastive FUSE
is
\[
\mathcal{O}((|E| + P)k),
\]
where the normalization term is lower order for sparse graphs. Over $T$
iterations, the total training complexity becomes
\[
\mathcal{O}(T(|E| + P)k).
\]
\subsection{Conclusion.}
In summary, Contrastive FUSE provides a simple yet effective framework for contrastive graph representation learning that unifies modularity-based
structural modeling with pairwise supervision in a scalable spectral
formulation. While the method demonstrates strong and consistent performance
across diverse benchmarks and real-world networks ranging from knowledge graphs to a biological database, its effectiveness
naturally depends on the availability and quality of labeled node pairs, with
performance improving as the number of pairwise constraints increases.
Additionally, learning on dense graphs requires careful scaling of the
learning rate and contrastive weight to ensure stable optimization, as reflected
in our adaptive parameter strategy. Despite these limitations, the framework
offers substantial flexibility and computational efficiency, making it
well-suited for large-scale applications. Future work will focus on identifying
and curating additional biological datasets where pairwise supervision is
intrinsic, such as genetic interaction and drug–target networks, as well as
extending Contrastive FUSE to multi-view and heterogeneous graph settings and try to put forward an efficient method for dealing with datasets under a contrastive scenario.

\section{Reproducibility}
\label{sec:reproducibility}

The results of the experiments can be exactly reproduced from the codes given in \url{https://github.com/SujanChakraborty/Contrastive_FUSE}. The python environment along with the notebooks have been provided. Along with these, the required specifications in python can be found inside requirements.txt. A detailed guide has been provided in README.md regarding running the codes step by step.

\bibliography{sn-bibliography}

\appendix

\section{Additional Experiments}
\label{app:add_exp}
\subsection{Ablation Study}
\label{app:ablation}

\begin{table}[htbp]
\centering
\begin{minipage}[t]{0.48\textwidth}
\centering
\begin{tabular}{l | rr}
\toprule
\textbf{Classifier} & \textbf{Acc} & \textbf{F1} \\
\midrule
Logistic  & 0.502 $\pm$ 0.017 & 0.486 $\pm$ 0.064 \\
GAT       & 0.681 $\pm$ 0.008 & 0.724 $\pm$ 0.007 \\
GCN       & 0.677 $\pm$ 0.006 & 0.717 $\pm$ 0.007 \\
GraphSAGE & 0.678 $\pm$ 0.008 & 0.722 $\pm$ 0.009 \\
\midrule
\textbf{Runtime (s)} & \multicolumn{2}{c}{8.3278 $\pm$ 0.1237} \\
\bottomrule
\end{tabular}
\captionof{table}{Results for the unsupervised modularity gradient on the datasets except ArXiV.}
\label{tab:ablation_datasets}
\end{minipage}
\hfill
\begin{minipage}[t]{0.48\textwidth}
\centering
\begin{tabular}{l | rr}
\toprule
\textbf{Classifier} & \textbf{Acc} & \textbf{F1} \\
\midrule
Logistic  & 0.4934 & 0.4896 \\
GAT       & 0.5863 & 0.6359 \\
GCN       & 0.5862 & 0.6265 \\
GraphSAGE & 0.5409 & 0.6221 \\
\midrule
\textbf{Runtime (s)} & \multicolumn{2}{c}{217.2101} \\
\bottomrule
\end{tabular}
\captionof{table}{Results for the unsupervised modularity gradient on ArXiV.}
\label{tab:ablation_arxiv}
\end{minipage}
\end{table}

To assess the contribution of contrastive supervision in the proposed
framework, we conduct an ablation study in which the contrastive term is
removed and embeddings are learned solely by optimizing the unsupervised
modularity objective. Specifically, we set $\lambda=0$ and evaluate the
resulting embeddings using the same downstream classifiers and evaluation
protocol as in the full model.

Table~\ref{tab:ablation_datasets} reports results averaged across all datasets
except OGBN-ArXiv. While the unsupervised modularity objective is able to
capture coarse community structure, its performance consistently lags behind
the full Contrastive FUSE model across all classifiers. When coupled with expressive GNN classifiers such
as GAT, GCN, and GraphSAGE, the absence of pairwise supervision results in
lower classification performance compared to the contrastive variant.

The performance degradation is more pronounced on the large-scale OGBN-ArXiv
dataset, as shown in Table~\ref{tab:ablation_arxiv}. In this setting, removing
contrastive supervision leads to a substantial drop in accuracy and macro-F1
across all classifiers. This highlights the limitation of purely modularity-
based objectives in dense and heterogeneous graphs, where community structure
alone does not sufficiently reflect semantic similarity between nodes.

In addition to reduced predictive performance, the unsupervised variant
exhibits inferior scalability characteristics. Although the per-iteration
runtime is lower due to the absence of contrastive gradient computations, the
overall embedding quality deteriorates significantly, especially on large
graphs. 

Overall, the ablation results confirm that contrastive pairwise supervision is
a key component of Contrastive FUSE. By explicitly enforcing attraction and
repulsion between labeled node pairs, the contrastive term refines community-level representations into discriminative embeddings, yielding substantial
improvements in downstream classification performance across both medium- and
large-scale graphs.

\subsection{Sensitivity Analysis}
\label{app:sensitivity}
\begin{table}[htbp]
\footnotesize

\begin{subtable}{\textwidth}

\begin{tabular}{lccccccc}
\toprule
\textbf{Dataset} & $k$ & $\eta_{\text{scaled}}$ & $\lambda_{\text{scaled}}$ & $T$
& \textbf{Acc (\%)} & \textbf{F1} & \textbf{Time (s)} \\
\midrule
Cora         & 150 & $10^4$ & 0.86 & 100 & 84.13 & 0.843 & 1.49 \\
CiteSeer     & 120 & $10^4$ & 0.99 & 300 & 72.80 & 0.713 & 3.96 \\
PubMed       & 200 & $10^5$ & 0.95 & 100 & 78.85 & 0.778 & 16.38 \\
WikiCS       & 180 & $10^5$ & 0.99 & 200 & 71.51 & 0.730 & 21.48 \\
Amazon Photo & 200 & $10^5$ & 0.84 & 300 & 82.13 & 0.834 & 21.17 \\
ArXiV        & 160 & $10^7$ & 0.28 & 300 & 68.24 & 0.699 & 439.13 \\
\bottomrule
\end{tabular}
\caption*{(a) 50{,}000 contrastive pairs}
\end{subtable}

\vspace{0.6em}

\begin{subtable}{\textwidth}
\begin{tabular}{lccccccc}
\toprule
\textbf{Dataset} & $k$ & $\eta_{\text{scaled}}$ & $\lambda_{\text{scaled}}$ & $T$
& \textbf{Acc (\%)} & \textbf{F1} & \textbf{Time (s)} \\
\midrule
Cora         & 180 & $10^3$ & 0.80 & 300 & 85.46 & 0.851 & 5.84 \\
CiteSeer     & 130 & $10^4$ & 0.48 & 300 & 73.27 & 0.718 & 5.07 \\
PubMed       & 170 & $10^5$ & 0.91 & 200 & 79.61 & 0.789 & 28.06 \\
WikiCS       & 170 & $10^5$ & 0.38 & 100 & 74.77 & 0.752 & 10.55 \\
Amazon Photo & 170 & $10^5$ & 0.99 & 200 & 81.57 & 0.828 & 12.49 \\
ArXiV        & 170 & $10^7$ & 0.27 & 200 & 67.16 & 0.686 & 316.45 \\
\bottomrule
\end{tabular}
\caption*{(b) 100{,}000 contrastive pairs}
\end{subtable}

\vspace{0.6em}

\begin{subtable}{\textwidth}

\begin{tabular}{lccccccc}
\toprule
\textbf{Dataset} & $k$ & $\eta_{\text{scaled}}$ & $\lambda_{\text{scaled}}$ & $T$
& \textbf{Acc (\%)} & \textbf{F1} & \textbf{Time (s)} \\
\midrule
Cora         & 170 & $10^3$ & 0.94 & 300 & 85.52 & 0.853 & 9.81 \\
CiteSeer     & 190 & $10^4$ & 0.92 & 100 & 73.30 & 0.707 & 4.21 \\
PubMed       & 130 & $10^5$ & 0.92 & 200 & 79.38 & 0.787 & 25.49 \\
WikiCS       & 180 & $10^5$ & 0.41 & 100 & 75.03 & 0.751 & 14.45 \\
Amazon Photo & 190 & $10^4$ & 0.95 & 200 & 81.84 & 0.829 & 18.95 \\
ArXiV        & 180 & $10^6$ & 0.91 & 300 & 70.70 & 0.711 & 520.55 \\
\bottomrule
\end{tabular}
\caption*{(c) 500{,}000 contrastive pairs}
\end{subtable}
\caption{Sensitivity analysis of Contrastive FUSE under varying numbers of contrastive pairs. It can be noticed that higher $\eta_{\text{scaled}}$ enhances the performance for larger datasets.}
\label{tab:sensitivity_analysis}
\end{table}

Table~\ref{tab:sensitivity_analysis} (a)--(c) analyzes the sensitivity of
Contrastive FUSE to its key hyperparameters under varying numbers of contrastive
pairs. Across datasets, the method demonstrates stable performance over a broad
range of settings, indicating that it is not overly sensitive to precise
hyperparameter tuning. The scaled learning rate $\eta_{\text{scaled}}$ adapts
naturally to dataset size and graph density, with larger graphs such as
OGBN-ArXiv requiring higher values to maintain effective optimization. The
contrastive weight $\lambda_{\text{scaled}}$ plays a central role in balancing
structural and pairwise signals the preferred values being near to 1 for medium sized datasets while values near 0.5 are preferable for larger sized datasets. Overall, the results confirm that the proposed adaptive scaling
strategy enables robust and effective learning across diverse graph regimes.

\subsection{Noise Sensitivity Study}
 We conducted a noise sensitivity study on the Cora dataset (results can be found in Tables~\ref{tab:noise_sensitivity_50k_cora}, \ref{tab:noise_sensitivity_100k_cora}, \ref{tab:noise_sensitivity_500k_cora}) by randomly flipping a fraction of pairwise labels (positive $\leftrightarrow$ negative). Importantly, noise is introduced only in the training pairs, while evaluation is performed on clean test pairs, ensuring that we measure robustness of the learned embeddings rather than robustness at inference time. Across 50K and 100K contrastive pairs, all methods exhibit a gradual degradation in performance as the noise level increases from $0\%$ to $40\%$, indicating stability rather than catastrophic failure. Contrastive FUSE remains competitive under moderate noise ($\leq 20\%$), with only a small drop in accuracy (approximately $1$--$2\%$), demonstrating robustness of the modularity-driven objective even with corrupted supervision. Increasing the number of pairs from 50K to 100K improves robustness across all methods. In particular, Contrastive FUSE exhibits more stable performance across noise levels at 100K, suggesting a noise-averaging effect where the influence of corrupted labels diminishes with larger training sets. Further, the results for Contrastive FUSE on the Arxiv dataset have been reported in Table~\ref{tab:noise_sensitivity_arxiv}, all the results being reported with the GCN classifier. Overall, Contrastive FUSE demonstrates strong robustness to moderate label noise, maintaining high accuracy even with $20\%$ corrupted pairs. We will include these results and discussion in the revised manuscript, along with additional results for larger-scale settings (500K pairs).

\begin{table}[htbp]
\centering
\begin{tabular}{lcccccc}
\toprule
\textbf{Method} & \textbf{0.00} & \textbf{0.05} & \textbf{0.10} & \textbf{0.20} & \textbf{0.30} & \textbf{0.40} \\
\midrule
DeepWalk & 0.8235 & 0.8235 & 0.8159 & 0.8145 & 0.7873 & 0.7164 \\
Cont$_\text{FUSE}$ & 0.8207 & 0.8159 & 0.8043 & 0.7903 & 0.7389 & 0.7046 \\
Node2Vec & 0.8088 & 0.8053 & 0.7946 & 0.7915 & 0.7503 & 0.6347 \\
\bottomrule
\end{tabular}
\caption{Noise sensitivity (50K pairs): Accuracy vs. label flip fraction.}
\label{tab:noise_sensitivity_50k_cora}
\end{table}

\begin{table}[htbp]
\centering
\begin{tabular}{lcccccc}
\toprule
\textbf{Method} & \textbf{0.00} & \textbf{0.05} & \textbf{0.10} & \textbf{0.20} & \textbf{0.30} & \textbf{0.40} \\
\midrule
DeepWalk & 0.8202 & 0.8216 & 0.8220 & 0.8168 & 0.8009 & 0.7468 \\
Cont$_\text{FUSE}$ & 0.8209 & 0.8196 & 0.8160 & 0.8030 & 0.7671 & 0.7004 \\
Node2Vec & 0.8137 & 0.8157 & 0.8164 & 0.8082 & 0.7718 & 0.6481 \\
\bottomrule
\end{tabular}
\caption{Noise sensitivity (100K pairs): Accuracy vs. label flip fraction.}
\label{tab:noise_sensitivity_100k_cora}
\end{table}

\begin{table}[htbp]
\centering
\begin{tabular}{lcccccc}
\toprule
\textbf{Method} & \textbf{0.00} & \textbf{0.05} & \textbf{0.10} & \textbf{0.20} & \textbf{0.30} & \textbf{0.40} \\
\midrule
DeepWalk & 0.8189 & 0.8193 & 0.8193 & 0.8173 & 0.8102 & 0.7837 \\
Cont$_\text{FUSE}$ & 0.8206 & 0.8196 & 0.8163 & 0.8084 & 0.7874 & 0.7320 \\
Node2Vec & 0.8118 & 0.8165 & 0.8169 & 0.8157 & 0.8011 & 0.7174 \\
\bottomrule
\end{tabular}
\caption{Noise sensitivity (500K pairs): Accuracy vs. label flip fraction.}
\label{tab:noise_sensitivity_500k_cora}
\end{table}


\begin{table}[htbp]
\centering
\begin{tabular}{llcccccc}
\toprule
\textbf{Pairs} & \textbf{Metric} & \textbf{0.00} & \textbf{0.05} & \textbf{0.10} & \textbf{0.20} & \textbf{0.30} & \textbf{0.40} \\
\midrule
\multirow{2}{*}{50K} 
& Acc & 0.5862 & 0.5864 & 0.5851 & 0.5804 & 0.5717 & 0.5692 \\ 
& F1  & 0.6184 & 0.6213 & 0.6172 & 0.6103 & 0.6179 & 0.6137 \\
\midrule
\multirow{2}{*}{100K} 
& Acc & 0.5950 & 0.5908 & 0.5874 & 0.5823 & 0.5816 & 0.5719 \\ 
& F1  & 0.6189 & 0.6197 & 0.6099 & 0.6145 & 0.6208 & 0.6212 \\
\midrule
\multirow{2}{*}{500K} 
& Acc & 0.6042 & 0.6009 & 0.5988 & 0.5915 & 0.5859 & 0.5794 \\ 
& F1  & 0.6185 & 0.6153 & 0.6131 & 0.6135 & 0.6044 & 0.6065 \\
\bottomrule
\end{tabular}
\caption{Noise sensitivity of Contrastive FUSE for Arxiv across training pairs. For each pair setting, the first row reports Accuracy and the second row reports F1 score (GCN classifier).}
\label{tab:noise_sensitivity_arxiv}
\end{table}

\subsection{Scalability Experiments}
\label{app:scalability}

We performed an experiment on a large dataset, namely the OGBN-Products which is an undirected and unweighted graph. A sensitivity analysis on the Products dataset for 100,000 pairs under GCN showed that for $k=130, \eta_{\text{scaled}}=10^7, \lambda_{\text{scaled}}=0.145$ and 200 iterations, the accuracy and f1 values were 0.699 and 0.731 respectively. Results for other baselines like DeepWalk (a lighter version of 5 walk length and 10 walks), Random and Given can be found in Table~\ref{tab:embedding_baselines_prod}. 
While DeepWalk takes 78,353.47 seconds (approx 22 hours) to run, while the optimized version of Contrastive FUSE takes only 8503.53 (approx 2.36 hours) to run, making it nearly 9 times faster with approximately a compromise of 5-6 percent in accuracy and 3-4 percent in F1 scores' performance. We compare against GCN variant only because it is a fast an efficient GNN to use for large graph datasets. It can be seen that Contrastive FUSE can be a suitable alternative to the best performing baseline DeepWalk considering both the performance (a 4-5 percent drop approx.) and runtimes (almost 4 times faster) which makes this algorithm scalable as well as well suited to apply in large real world datasets.
\begin{table}[htbp]
\centering
\begin{tabular}{lcccccc}
\toprule
\multirow{2}{*}{Model} 
& \multicolumn{2}{c}{DeepWalk} 
& \multicolumn{2}{c}{Random} 
& \multicolumn{2}{c}{Given} \\
\cmidrule(lr){2-3} \cmidrule(lr){4-5} \cmidrule(lr){6-7}
& Acc. & F1 & Acc. & F1 & Acc. & F1 \\
\midrule
Logistic   & 0.506 & 0.479 & 0.504 & 0.488 & 0.494 & 0.488 \\
GAT        & 0.747 & 0.769 & 0.526 & 0.671 & 0.585 & 0.643 \\
GCN        & 0.758 & 0.771 & 0.546 & 0.645 & 0.644 & 0.658 \\
GraphSAGE  & 0.747 & 0.758 & 0.521 & 0.596 & 0.631 & 0.662 \\
\bottomrule
\end{tabular}
\caption{Performance comparison (Accuracy / F1) for OGBN-Products across different embedding initializations using 100{,}000 training pairs.}
\label{tab:embedding_baselines_prod}
\end{table}

\subsection{Exact vs Approximate Modularity Gradient}
\label{app:exact_vs_approx}
We conducted a direct comparison between the exact modularity gradient (based on the dense $dd^\top$ formulation) and our proposed sparse approximation (using degree-scaled aggregation) on the Cora dataset across multiple pair regimes.
\begin{table}[htbp]
\centering
\begin{tabular}{lcccccc}
\toprule
\textbf{Pairs} & \textbf{Method} & \textbf{Accuracy} & \textbf{F1} & \textbf{Runtime (s)} & \textbf{Speedup} \\
\midrule
\multirow{2}{*}{50K}
 & Exact   & 0.8055 & 0.8064 & 149.52 & \multirow{2}{*}{37.6$\times$} \\
 & Approx  & 0.8041 & 0.8050 & 3.98   &  \\
\midrule
\multirow{2}{*}{100K}
 & Exact   & 0.7986 & 0.7966 & 150.49 & \multirow{2}{*}{39.5$\times$} \\
 & Approx  & 0.8003 & 0.7984 & 3.81   &  \\
\midrule
\multirow{2}{*}{500K}
 & Exact   & 0.7998 & 0.7979 & 152.89 & \multirow{2}{*}{23.2$\times$} \\
 & Approx  & 0.8009 & 0.7993 & 6.64   &  \\
\bottomrule
\end{tabular}
\caption{Exact vs. Approximate Modularity Gradient: Speed–Quality Trade-off on Cora.}
\label{tab:exact_approx_comparison_cora}
\end{table}

\begin{table}[htbp]
\centering
\begin{tabular}{lcccc}
\toprule
\textbf{Pairs} & \textbf{Metric} & \textbf{Results} & \textbf{Runtime (s)} \\
\midrule
\multirow{2}{*}{50K}
 & Acc   & 0.6450 & \multirow{2}{*}{302.6286} \\
 & F1  & 0.6751 & \\
\midrule
\multirow{2}{*}{100K}
 & Acc   & 0.6611 & \multirow{2}{*}{302.0929} \\
 & F1  & 0.6796 & \\
\midrule
\multirow{2}{*}{500K}
 & Acc   & 0.6712 & \multirow{2}{*}{320.3705} \\
 & F1  & 0.6849 & \\
\bottomrule
\end{tabular}
\caption{Approximated modularity gradient results on Arxiv with GCN classifier}
\label{tab:exact_approx_comparison_arxiv}
\end{table}

As shown in Table~\ref{tab:exact_approx_comparison_cora}, both variants achieve nearly identical performance across all settings. The difference in accuracy and F1 scores remains within $\sim$0.2--0.3\%, indicating that the approximation preserves the optimization objective effectively. To further validate the approximation, we compute the cosine similarity between the exact and approximate gradients at initialization, obtaining a very high alignment of \textbf{0.995}. This confirms that both gradients point in nearly identical directions, explaining the negligible performance gap. The approximate method yields a substantial speedup of \textbf{23$\times$--40$\times$} compared to the exact computation. Further, applying the same setting on the Arxiv dataset, the exact version of modularity gradient faced a memory error while the results for the approximate gradient are reported in Table~\ref{tab:exact_approx_comparison_arxiv}. This is expected since the exact gradient requires forming and multiplying with a dense $n \times n$ matrix, whereas the approximation operates in $O(m)$ time using sparse operations. These results demonstrate a clear speed--quality trade-off: the proposed approximation achieves \emph{comparable performance} to the exact formulation while being \emph{orders of magnitude faster}, making it suitable for medium- and large-scale graphs.

\subsection{Embedding Isolation Study}

To explicitly disentangle the contribution of the learned embeddings from downstream GNN refinement, we conducted an embedding isolation study (on Cora, results of which can be found in Tables~\ref{tab:isolation_study_50k_cora}, \ref{tab:isolation_study_100k_cora}, \ref{tab:isolation_study_500k_cora} and Arxiv, results of which can be found in Table~\ref{tab:isolation_study_arxiv}) with three evaluation tiers:
\begin{itemize}
    \item \textbf{Tier 1: Linear probe} (logistic regression on concatenated embeddings), which evaluates pure embedding quality without any graph structure or learned message passing.
    \item \textbf{Tier 2: MLP probe}, which introduces a non-linear readout but still does not use graph structure.
    \item \textbf{Tier 3: GNN-on-top} (GCN, GAT, GraphSAGE), which leverages full graph structure and quantifies the additional gains from message passing.
\end{itemize}

\begin{table}[htbp]
\centering
\begin{tabular}{lccccc}
\toprule
\textbf{Method} & \textbf{GAT} & \textbf{GCN} & \textbf{GraphSAGE} & \textbf{Logistic} & \textbf{MLP} \\
\midrule
DeepWalk & 0.8268 & 0.8235 & 0.8429 & 0.4814 & 0.8043 \\
Cont$_\text{FUSE}$ & 0.8230 & 0.8207 & 0.7982 & 0.5103 & 0.7906 \\
Node2Vec & 0.8107 & 0.8088 & 0.8306 & 0.5030 & 0.7970 \\
\bottomrule
\end{tabular}
\caption{Embedding isolation study (Cora, 50K pairs). Accuracy across classifier tiers.}
\label{tab:isolation_study_50k_cora}
\end{table}

\begin{table}[htbp]
\centering
\begin{tabular}{lccccc}
\toprule
\textbf{Method} & \textbf{GAT} & \textbf{GCN} & \textbf{GraphSAGE} & \textbf{Logistic} & \textbf{MLP} \\
\midrule
DeepWalk & 0.8285 & 0.8200 & 0.8395 & 0.4974 & 0.8243 \\
Cont$_\text{FUSE}$ & 0.8166 & 0.8209 & 0.8314 & 0.5026 & 0.7800 \\
Node2Vec & 0.8149 & 0.8137 & 0.8229 & 0.4982 & 0.8240 \\
\bottomrule
\end{tabular}
\caption{Embedding isolation study (Cora, 100K pairs). Accuracy across classifier tiers.}
\label{tab:isolation_study_100k_cora}
\end{table}

\begin{table}[htbp]
\centering
\begin{tabular}{lccccc}
\toprule
\textbf{Method} & \textbf{GAT} & \textbf{GCN} & \textbf{GraphSAGE} & \textbf{Logistic} & \textbf{MLP} \\
\midrule
DeepWalk & 0.8309 & 0.8187 & 0.8392 & 0.4936 & 0.8203 \\
Cont$_\text{FUSE}$ & 0.8206 & 0.8205 & 0.8298 & 0.5017 & 0.7661 \\
Node2Vec & 0.8200 & 0.8117 & 0.8246 & 0.5025 & 0.8289 \\
\bottomrule
\end{tabular}
\caption{Embedding isolation study (Cora, 500K pairs). Accuracy across classifier tiers.}
\label{tab:isolation_study_500k_cora}
\end{table}


\begin{table}[htbp]
\centering
\begin{tabular}{llccccc}
\toprule
\textbf{Pairs} & \textbf{Metric} & \textbf{GAT} & \textbf{GCN} & \textbf{GraphSAGE} & \textbf{Logistic} & \textbf{MLP} \\
\midrule
\multirow{2}{*}{50K} 
& Acc & 0.5851 & 0.5862 & 0.5779 & 0.4896 & 0.5958 \\ 
& F1  & 0.6238 & 0.6184 & 0.6591 & 0.5560 & 0.5745 \\
\midrule
\multirow{2}{*}{100K} 
& Acc & 0.5937 & 0.5950 & 0.5781 & 0.5023 & 0.5774 \\ 
& F1  & 0.6189 & 0.6195 & 0.5980 & 0.5682 & 0.5779 \\
\midrule
\multirow{2}{*}{500K} 
& Acc & 0.6062 & 0.6042 & 0.6027 & 0.5014 & 0.6080 \\ 
& F1  & 0.6371 & 0.6186 & 0.6614 & 0.5673 & 0.5230 \\
\bottomrule
\end{tabular}
\caption{Embedding isolation study of Contrastive FUSE for Arxiv across training pairs. For each pair setting, the first row reports Accuracy and the second row reports F1 score.}
\label{tab:isolation_study_arxiv}
\end{table}

Under the linear probe setting, FUSE consistently achieves the highest accuracy across both 50K and 100K pair settings (e.g., $0.5103$ vs.\ $0.4814$ for DeepWalk at 50K), demonstrating that the learned embeddings themselves are more discriminative. This isolates and confirms the effectiveness of the proposed objective independent of any GNN refinement. With an MLP probe, all methods improve substantially, indicating that embeddings encode useful non-linear structure. FUSE remains competitive, though the performance gap narrows as the readout becomes more expressive. When GNNs are applied on top of the embeddings, all methods benefit from additional graph structure. In this setting, the performance differences across embeddings become smaller, suggesting that GNN refinement partially compensates for weaker embeddings. Notably, FUSE remains competitive with strong baselines across all GNN variants. These results clearly separate the contributions:
(i) FUSE improves \emph{intrinsic embedding quality} (as shown by linear probe gains), and
(ii) downstream GNN refinement provides an additional, orthogonal boost that benefits all methods.

\end{document}

%% file: math_commands.tex
\usepackage{amsmath,amsfonts,bm,amsthm}

\def\eqref#1{equation~\ref{#1}}

\def\1{\bm{1}}

\DeclareMathAlphabet{\mathsfit}{\encodingdefault}{\sfdefault}{m}{sl}
\SetMathAlphabet{\mathsfit}{bold}{\encodingdefault}{\sfdefault}{bx}{n}

\theoremstyle{thmstyleone}

\newtheorem{theorem}{Theorem}

\theoremstyle{thmstyletwo}
\newtheorem{remark}{Remark}